%% file: main.tex
\documentclass{article}
\usepackage{iclr2027_conference,times}
\usepackage[utf8]{inputenc}
\usepackage[T1]{fontenc}
\usepackage{amsmath,amssymb,amsthm,mathtools}
\usepackage{thmtools,thm-restate}
\usepackage{algorithm,algpseudocode}
\usepackage{booktabs,graphicx,microtype,xcolor}
\usepackage{tikz}
\definecolor{citationblue}{RGB}{32,82,132}
\usepackage[colorlinks=true,citecolor=citationblue,linkcolor=citationblue,urlcolor=citationblue]{hyperref}
\usepackage[noabbrev]{cleveref}
\usepackage{enumitem}
\usepackage{placeins}
\input{macros}
\title{A Query Is Not a Commitment: Learning to Correct Expert Answers in Online Deferral}
\iclrfinalcopy
\author{\textbf{Yannis Montreuil}$^1$ \quad \textbf{Axel Carlier}$^2$ \\
\textbf{Lai Xing Ng}$^3$ \quad \textbf{Wei Tsang Ooi}$^1$ \\[4pt]
{\normalfont\small $^1$School of Computing, National University of Singapore} \\
{\normalfont\small $^2$IRIT, Toulouse INP, France} \\
{\normalfont\small $^3$Institute for Infocomm Research, A*STAR, Singapore} \\
{\normalfont\small \texttt{yannis.montreuil@u.nus.edu}}}
\hypersetup{
  pdftitle={A Query Is Not a Commitment: Learning to Correct Expert Answers in Online Deferral},
  pdfauthor={Yannis Montreuil, Axel Carlier, Lai Xing Ng, Wei Tsang Ooi}
}
\begin{document}
\raggedbottom
\maketitle
\lhead{Preprint}
\begin{abstract}
An inaccurate expert can still provide useful information after correction.
We study online learning to defer in which the learner chooses an expert
and fixes a correction function before purchasing its answer, then applies
that function to the answer received. The difficulty is that observed losses
reflect both expert quality and an unfinished correction: early errors can
discourage queries that would be valuable after learning.
We propose \ORUCB{}, which pools shared and expert-specific polynomial
responses. A bound on cumulative response-learning error calibrates
confidence-weighted risk regression and exploration, allowing the router
to account for this error when deciding which answers to buy.
Under bounded residuals and disagreements, a fixed feasible model of optimal responses,
and linear models of free and optimal queried risk, the calibrated algorithm
achieves high-probability pseudo-regret $O(\sqrt T\log(T+1))$ over $T$ rounds
for fixed problem parameters. The guarantee permits singular answer
distributions and misspecified shared responses; optimality is relative
to the bounded response class.
On four test streams, the selected cubic policy has lower
fee-inclusive cost than seven baselines that deploy answers unchanged.
Comparisons with a common correction learner examine routing, while
six-price comparisons measure cost and query rates.
\end{abstract}
\newlength{\templateparskip}
\newlength{\templateabovedisplayskip}
\newlength{\templatebelowdisplayskip}
\setlength{\templateparskip}{\parskip}
\setlength{\templateabovedisplayskip}{\abovedisplayskip}
\setlength{\templatebelowdisplayskip}{\belowdisplayskip}
\setlength{\parskip}{4.5pt}
\setlength{\abovedisplayskip}{6pt plus2pt minus3pt}
\setlength{\belowdisplayskip}{6pt plus2pt minus3pt}
\input{sections/01_introduction}
\input{sections/06_related_work}
\input{sections/02_background}

\input{sections/02_problem}
\input{sections/03_method}
\input{sections/04_theory}
\input{sections/05_experiments}
\input{sections/06_conclusion}
\setlength{\parskip}{\templateparskip}
\setlength{\abovedisplayskip}{\templateabovedisplayskip}
\setlength{\belowdisplayskip}{\templatebelowdisplayskip}
\clearpage
\input{sections/07_ai_use}
\bibliographystyle{iclr2027_conference}
\bibliography{references}
\clearpage
\appendix
\setlength{\jot}{5pt}
\input{appendix/a_protocol_and_dimensions}
\clearpage
\input{appendix/e_implementation}
\clearpage
\input{appendix/b_response_geometry}
\clearpage
\input{appendix/d_optimal_risk_proof}

\clearpage
\input{appendix/h_experiment_details}
\clearpage
\input{appendix/g_data_construction}
\clearpage
\input{appendix/i_related_details}
\end{document}

%% file: macros.tex
\newcommand{\E}{\mathbb{E}}

\newcommand{\R}{\mathbb{R}}
\newcommand{\cE}{\mathcal{E}}
\newcommand{\cF}{\mathcal{F}}

\newcommand{\cA}{\mathcal{A}}

\newcommand{\one}{\mathbf{1}}

\newcommand{\Reg}{\operatorname{Reg}}
\newcommand{\clip}{\operatorname{clip}}

\newcommand{\argmin}{\operatorname*{arg\,min}}

\newcommand{\diag}{\operatorname{diag}}

\newtheorem{theorem}{Theorem}[section]

\newtheorem{assumption}[theorem]{Assumption}

\crefname{assumption}{assumption}{assumptions}
\Crefname{assumption}{Assumption}{Assumptions}
\theoremstyle{remark}

\newcommand{\ORUCB}{\textnormal{\textsc{Response-Aware UCB}}}

%% file: sections/01_introduction.tex
\section{Introduction}
\label{sec:introduction}
Learning to defer (L2D) asks a router to choose between an internal
prediction and an external expert's answer
\citep{madras2018predict,mozannar2020consistent, montreuil2026beyond}. With several experts
and consultation fees, the router must decide whom to ask and whether the
expected reduction in prediction error is worth the price
\citep{mao2024regression}. Standard direct deferral deploys a purchased
answer unchanged. This ties the value of consultation to the accuracy of
the raw answer, even when its errors are predictable. Suppose a weather
model tracks day-to-day temperature changes but consistently forecasts
too high. Its uncorrected error and fee may make a query unattractive,
while removing the stable bias would make the same answer useful. Even a
router that knows the data law cannot realize this gain if it must deploy
the answer unchanged. The question is therefore not only \emph{whom to
ask}, but also \emph{how to use the answer} once it arrives.

We study this distinction through online L2D with \emph{post-answer
responses}. On round $t$, the learner sees its internal prediction $h_t$,
the public context, the available experts, and their fees. It either uses
$h_t$ or chooses an expert $j$ and fixes a correction function
$\Delta_{t,j}$. If it queries $j$, the final prediction is
\begin{equation}
\widehat Y_t=M_{t,j}+\Delta_{t,j}(M_{t,j}-h_t).
\label{eq:intro-response}
\end{equation}
The function is fixed before the query and evaluated after the answer
arrives. Only the selected expert's answer is observed. The outcome is
revealed after the final prediction on every round, providing feedback for
future corrections and query decisions. Thus the final prediction may use
the purchased answer, but the decision to buy that answer is made before
it is seen. We represent this sequential decision in Appendix~\ref{app:protocol}.

The router must decide whether to buy an expert's answer while it is still
learning how to correct that answer. In direct deferral, a purchase reveals
the loss of the queried action. With a learned response, it reveals the
loss under the learner's current correction, while the router
must assess the risk attainable with the best feasible response. The
conditional gap is nonnegative and can change with the response; a
confidence bound for centered observation noise alone does not cover it.
Early correction errors can therefore make a useful expert appear too
costly. If queries stop, its retained response loses direct training
observations. We propose \ORUCB{} to handle this moving target. It pools
bounded polynomial responses that can use the purchased answer nonlinearly:
a shared fit offers an initial proposal for a new expert, while an
identity-specific fit preserves what was learned when that expert returns.
The shared fit need not describe every expert correctly. We bound the
cumulative response-learning error relative to feasible identity responses,
then use that bound to calibrate confidence-weighted risk regression. This
limits the influence of observations acquired at high queried-risk
uncertainty; a confidence bonus favors queries when an expert's corrected
risk could still be low.

Our contributions are as follows.
(i) We define a conditional-law oracle that, like the learner, selects an
expert and fixes its response before seeing the answer. This comparator
measures the cost of learning both which answers to buy and how to use them.
(ii) Under fixed feasible models of optimal response coefficients and linear
models of free and optimal queried risk, calibrated \ORUCB{} has
high-probability pseudo-regret $O(\sqrt T\log(T+1))$ for fixed problem
parameters. The guarantee allows singular answer distributions and
misspecified shared responses. A robustness extension quantifies the cost
of approximate response and risk models when bounds on their approximation
errors are supplied for calibration.
(iii) On four test streams, the selected cubic policy
has lower cost than seven external baselines that deploy answers unchanged.
Comparisons with a common correction learner examine routing, while
six-price comparisons measure cost and query rates.

%% file: sections/06_related_work.tex
\section{Related work}
\label{sec}

\textbf{Learning to defer.}
L2D learns when to predict and when to rely on an expert
\citep{madras2018predict,mozannar2020consistent}.
In regression, routing compares the prediction losses and consultation
costs of several experts \citep{mao2024regression}. In classification,
\citet{montreuil2026beyond} separate classifier and expert scores to address
coupling in joint prediction--deferral training.
Online L2D extends routing to varying experts and selected-action feedback,
primarily in classification settings \citep{duy2026online}, while
time-series deferral also models changes in expert performance
\citep{montreuil2026slds}. Our setting differs in that querying an expert does
not end the decision process. After purchasing an answer, the learner must
also learn how to transform that answer into the final prediction. The
response function is fixed before acquisition, but its output depends on the
answer received. Errors in learning this response therefore affect both the
quality of the current prediction and which experts are queried in the
future.

\textbf{Online bandits.}
Contextual bandits learn action values from selected-action feedback using
linear confidence bounds \citep{li2010contextual,abbasi2011improved},
parameter sampling \citep{agrawal2013thompson}, or neural confidence models
\citep{zhou2020neuralucb}. Our comparisons cover these alternatives through
SleepingLinUCB, SharedLinUCB, LinTS, and NeuralUCB. SleepingLinUCB maintains a
separate model for each expert and restricts selection to available experts,
following the sleeping-bandit setting \citep{kleinberg2010sleeping}, while
SharedLinUCB uses a shared context--identity model. To account for changing
performance, we also compare with discounted regression
(D-LinUCB; \citealt{russac2019weighted}) and detector-triggered restarts
(CUSUM-LinUCB and GLR-LinUCB), adapted from the non-contextual methods of
\citet{liu2018change} and \citet{besson2022efficient}. All seven baselines use
our availability and feedback protocol.
Our confidence weighting follows \citet{he2022nearly}. The additional
analytical step is to bound the bias generated by learning the responses
and use this bound to calibrate acquisition.
Appendix~\ref{app:related-details} develops these connections.

%% file: sections/02_background.tex
\section{Background}
\label{sec:background}
\textbf{Learning to defer.}
In learning to defer (L2D), a policy decides whether to use an internal
prediction or delegate to an expert
\citep{madras2018predict,mozannar2020consistent}. The expert may be a human
or another model. For regression, let $Y\in\R$ be the outcome and
$M=(M_1,\ldots,M_K)\in\R^K$ the potential answers from $K\ge1$ expert identities.
The public input $x\in\mathcal X$ determines the internal prediction
$h(x)$, available identities $\mathcal E(x)\subseteq\{1,\ldots,K\}$ and
consultation fees $\beta_j(x)\ge0$.
An admissible policy, or \emph{router}, is a measurable map
$\pi:\mathcal X\to\{0,\ldots,K\}$ with
$\pi(x)\in\{0\}\cup\mathcal E(x)$. Action $0$ uses the internal prediction
$h(x)$; action $j$ deploys expert answer $M_j$ unchanged.
We call squared prediction error the \emph{loss} and loss plus the
consultation fee the \emph{cost}:
\begin{equation}
C^{\rm def}(x,Y,M;i)=
\begin{cases}
(Y-h(x))^2,&i=0,\\
(Y-M_j)^2+\beta_j(x),&i=j\in\mathcal E(x).
\end{cases}
\label{eq:background-deferral-cost}
\end{equation}
L2D minimizes $\E[C^{\rm def}(X,Y,M;\pi(X))]$ over a fixed
nonempty class $\Pi$ of admissible routers, where $X$ is the random public
input \citep{mao2024regression, mao2024principled}. Querying an expert improves expected cost over
the internal prediction when its conditional loss reduction exceeds its
fee.

\textbf{Online decisions and feedback.}
Online L2D learns from sequential feedback \citep{duy2026online}.
On round $t$ of $T\ge1$ rounds, the pre-query information $\cF_t$ contains
the completed history and current public input $x_t$; mathematically, it
is a sigma-field. A history-dependent router chooses
$I_t=\pi_t(x_t)\in\{0\}\cup\mathcal E(x_t)$: $I_t=0$ uses $h(x_t)$,
while $I_t=j$ selects expert $j$. These choices are called \emph{arms}
in bandit terminology. Availability may change while expert identities
persist. The internal predictor may also learn, with its current prediction
included in $x_t$.
Write $M_t=(M_{t,1},\ldots,M_{t,K})$ for the potential-answer vector.
Public quantities such as $x_t$ may be random through the history; they
are known when conditioning on $\cF_t$.

Feedback determines which losses can be used to update the router.
Full information reveals every available action's loss; bandit feedback
reveals only the chosen action's loss \citep{auer2002finite}.
If outcomes are revealed on every round but expert answers only upon
purchase, the internal loss is observed even when an expert is selected.
The losses of unqueried experts are generally unavailable.

For $\Pi$ fixed before interaction, realized direct-deferral regret
compares cumulative cost with the best router in hindsight
\citep{cesabianchi2006prediction}:
\begin{equation}
\Reg_T^{\rm def}(\Pi)=
\sum_{t=1}^T C^{\rm def}(x_t,Y_t,M_t;I_t)
-\inf_{\pi\in\Pi}\sum_{t=1}^T
C^{\rm def}(x_t,Y_t,M_t;\pi(x_t)).
\label{eq:background-direct-regret}
\end{equation}
The infimum is outside the sum: one router is used throughout, but it may
select different experts at different inputs. Both costs use the same
realized sequence, including unobserved potential answers.
A \emph{conditional risk} is expected prediction loss given $\cF_t$.
Our \emph{conditional pseudo-regret} in \cref{sec:problem} compares
expected costs with an oracle's best feasible action and response on each
round, chosen before any current answer is revealed.

%% file: sections/02_problem.tex
\section{Protocol and comparator}
\label{sec:problem}
We now specify the allowed corrections and the oracle against which we
measure the cost of learning to query and use an answer.

\textbf{Protocol.}
Write $h_t=h(x_t)$, $\cE_t=\mathcal E(x_t)$ and
$\beta_{t,j}=\beta_j(x_t)$. Before querying, $Y_t$ and the potential
answers $(M_{t,j})_{j\in\cE_t}$ are hidden. We assume non-intervention:
purchasing an answer does not change their conditional joint law, as when
a forecast does not affect the weather. Using only $\cF_t$, the learner
fixes $I_t$ and a response function. Action $0$ deploys $h_t$; a query to
$j$ pays $\beta_{t,j}$, reveals only $M_{t,j}$ and applies the correction
in \eqref{eq:intro-response}. The outcome is then revealed on every round.
Our policy is deterministic, with public orders for ties and registration.

\textbf{Disagreement.}
Assume $|Y_t-h_t|\le R$ and $|M_{t,j}-h_t|\le D$ almost surely for every
round and active expert, with known $R,D>0$.
Set $G=R+D$, $d_{t,j}=M_{t,j}-h_t$ and
$e_{t,j}=Y_t-M_{t,j}$; then $|e_{t,j}|\le G$.
We learn the response by predicting the expert residual $e_{t,j}$ from
disagreement $d_{t,j}$.
Since $M_{t,j}=h_t+d_{t,j}$ and $h_t$ is public, centering loses no
information about the answer given $\cF_t$. It gives a bounded response
input without requiring bounded absolute prediction levels.

\textbf{Response class.}
We model the correction as a low-degree polynomial in disagreement.
Polynomial features allow nonlinear adjustments while keeping the model
linear in its coefficients \citep{hu2020smooth,liu2021smooth}, so fitting
the response by squared loss remains a convex problem. Keeping the degree
low also limits how many coefficients must be learned from each expert's
purchased answers. Because purchases depend on earlier fits, we use an
online least-squares comparison that does not require independent samples
\citep{hazan2023online}. It bounds cumulative fitting loss relative to a
fixed feasible model, allowing us to account for response-learning error
in the routing analysis (\cref{lem:constrained-regret}).

To keep the corrections bounded, we use scaled Legendre features and
restrict the coefficient norm. For an integer order $q\ge1$, let
$\mathbb B_q=\{a\in\R^{q+1}:\|a\|_2\le1\}$, with $\|\cdot\|_2$ the
Euclidean norm. Using the degree-$k$ Legendre polynomial $P_k$, define
\begin{equation}
\phi_q(d)=\frac{(1,DP_1(d/D),\ldots,DP_q(d/D))^\top}{\sqrt{1+qD^2}},
\qquad \Delta_a(d)=Ga^\top\phi_q(d).
\label{eq:polynomial-dictionary}
\end{equation}
For a query to expert $j$, the learner fixes $a_{t,j}\in\mathbb B_q$
before purchase. Its deployed prediction is
\begin{equation}
\widehat Y_t=
\begin{cases}
h_t, & \text{if } I_t=0,\\
M_{t,j}+\Delta_{a_{t,j}}(d_{t,j}), & \text{if } I_t=j\in\cE_t.
\end{cases}
\label{eq:deployed-prediction}
\end{equation}
For $d\in[-D,D]$, the bound $|P_k(d/D)|\le1$
\citep{szego1975orthogonal} gives $\|\phi_q(d)\|_2\le1$ and
$|\Delta_a(d)|\le G$ for $a\in\mathbb B_q$. The factor $D$ preserves
the linear coordinate $DP_1(d/D)=d$. Since $|e_{t,j}|\le G$, the queried prediction loss
$(e_{t,j}-\Delta_a(d_{t,j}))^2$ is at most $4G^2$. This bound supplies
bounded labels for risk estimation. The scaling and coefficient ball
define the response budget. The normalization depends on $q$, so the
bounded classes need not be nested.
The analysis requires bounded features and linear coefficients, not
orthogonality under the answer distribution, and extends to other fixed
bounded dictionaries under their corresponding response and risk models.

For the experiments, we choose $q=3$: four coefficients at a fixed context
capture bias correction, linear rescaling, and quadratic and cubic
adjustments. The cubic term allows the rescaling to vary with disagreement
magnitude.

\textbf{Oracle and regret.}
The comparator class consists of all $\cF_t$-measurable rules choosing
free prediction or a pair $(j,a)$ with $j\in\cE_t$ and $a\in\mathbb B_q$.
Routing may use the full public history, but the correction must belong
to the specified response class. Write
$L_t^0=\E[(Y_t-h_t)^2\mid\cF_t]$ for the free risk. The best queried
prediction risk for expert $j$ is
\begin{equation}
W_{t,j}^*=\min_{a\in\mathbb B_q}
\E[(e_{t,j}-Ga^\top\phi_q(d_{t,j}))^2\mid\cF_t].
\label{eq:risks}
\end{equation}
The minimization fixes one coefficient vector using $\cF_t$; its loss is
averaged over the still-hidden outcome and answer. Compactness and continuity
give a minimizer; Appendix~\ref{app:protocol} specifies measurable versions.
Since $L_t^0\le R^2$, a query fee of at least $R^2$ cannot improve on free
prediction. Restricting queries to
$\cA_t=\{j\in\cE_t:\beta_{t,j}<R^2\}$ therefore preserves the oracle cost
$C_t^*=\min\{L_t^0,\min_{j\in\cA_t}(W_{t,j}^*+\beta_{t,j})\}$,
with an empty inner minimum equal to $+\infty$.
The oracle knows the conditional law and may change expert and response
with context, but must choose both before any current answer is revealed. Set $\beta_{t,0}=0$ and
$C_t=(Y_t-\widehat Y_t)^2+\beta_{t,I_t}$. Our pseudo-regret is
\begin{equation}
\Reg_T=\sum_{t=1}^T\{\E[C_t\mid\cF_t]-C_t^*\}.
\label{eq:regret}
\end{equation}
Unlike hindsight regret over the fixed direct-deferral class $\Pi$ in
\cref{sec:background}, this compares conditional expected costs with the
best feasible action and response on each round. The oracle chooses before
feedback, following the conditional-mean convention of linear contextual
bandits \citep{abbasi2011improved,he2022nearly}.

%% file: sections/03_method.tex
\section{\ORUCB}
\label{sec:method}
\ORUCB{} uses past purchases to learn both how to correct an expert's
answer and whether that answer is worth buying. The difficulty is that
an expert's early losses reflect an unfinished correction, not just the
risk attainable with its best response. We first fit the corrections,
then bound the error incurred while learning them, and use that bound
to construct the query rule.

\subsection{Learning the response}
Each expert has an identity-specific fit that retains what was learned
from its purchased answers. A second, shared fit uses purchases from all
identities and can help with a new expert when their responses have common
structure. We combine these two proposals according to their past losses
on the selected identity, without requiring the shared model to be correct.

\textbf{Response fits.}
Let $\ell_{t,j}\in\R^{d_\ell}$ and $s_{t,j}\in\R^{d_s}$ be public features
of norm at most one for the identity and shared fits, respectively.
They determine the correction coefficients before purchase; $\phi_q(d_{t,j})$
determines how those coefficients act on the answer after purchase.
For example, an intercept and a bounded temperature covariate let a
coefficient vary with the public temperature. With regularizers
$\lambda_I,\lambda_P>0$, we fit the normalized residuals by
\begin{align}
\Theta_{t,j}
&=\argmin_{\|\Theta\|_F\le1}
\left\{\lambda_I\|\Theta\|_F^2+
\sum_{r<t:I_r=j}
\left(\frac{e_{r,j}}G-\ell_{r,j}^\top\Theta\phi_q(d_{r,j})\right)^2\right\},
\label{eq:identity-response}\\
\Psi_t
&=\argmin_{\Psi\in\R^{d_s\times(q+1)}}
\left\{\lambda_P\|\Psi\|_F^2+
\sum_{r<t:I_r\ne0}
\left(\frac{e_{r,I_r}}G-s_{r,I_r}^\top\Psi\phi_q(d_{r,I_r})\right)^2\right\}.
\label{eq:population-response}
\end{align}
Here $\Theta\in\R^{d_\ell\times(q+1)}$ and $\|\Theta\|_F$ is the
Euclidean norm of its entries. The regularizers make both minimizers unique.
The identity proposal $a^H_{t,j}=\Theta_{t,j}^\top\ell_{t,j}$ is feasible
because $\|a^H_{t,j}\|_2\le\|\Theta_{t,j}\|_F\|\ell_{t,j}\|_2\le1$.
We project the shared proposal onto the same coefficient ball:
\begin{equation}
a^P_{t,j}=\frac{\Psi_t^\top s_{t,j}}
{\max\{1,\|\Psi_t^\top s_{t,j}\|_2\}}.
\label{eq:population-projection}
\end{equation}
Superscripts $H$ and $P$ denote identity history and population.
The identity fit is constrained during optimization so that its cumulative
loss can be compared with any fixed feasible identity model
(\cref{lem:constrained-regret} in Appendix~\ref{app:optimal-risk-proof}).
The shared fit only needs to supply a feasible alternative, so projection
suffices.

\textbf{Pooling.}
To adapt to whether an expert benefits from shared information, we deploy
a convex mixture of the two coefficient proposals:
\begin{equation}
a_{t,j}=w^H_{t,j}a^H_{t,j}+w^P_{t,j}a^P_{t,j},
\qquad w^H_{t,j},w^P_{t,j}\ge0,\quad w^H_{t,j}+w^P_{t,j}=1.
\label{eq:pool}
\end{equation}
We initialize both weights at $1/2$ for each new identity. After querying $j$
and observing the outcome, we evaluate the pre-query proposals using
$\mathcal L_t^u=(e_{t,j}/G-(a^u_{t,j})^\top\phi_q(d_{t,j}))^2/4$,
$u\in\{H,P\}$. We update the weights by exponential weighting
\citep{cesabianchi2006prediction}, multiplying each by
$\exp(-\eta\mathcal L_t^u)$ and renormalizing, with $0<\eta\le1/2$.
Other identities keep their weights. Both losses use the same purchased
answer, so comparing proposals needs no additional query.

The mixture stays in $\mathbb B_q$ by convexity. For the stated learning-rate range,
its cumulative squared loss over queries to each identity is at most that of either
proposal sequence plus $4G^2\log2/\eta$.
\Cref{lem:pool} in Appendix~\ref{app:optimal-risk-proof} proves this
comparison using exp-concavity \citep{hazan2023online}.
Thus the shared proposal can help without invalidating the guarantee
provided by the identity fit. All fits use only past observations, and
identity histories and pool weights persist during absence.

\subsection{Accounting for response error}
An expert may be worth buying after correction even when its early
corrected predictions are poor. To account for this distinction, we separate
the loss we observe from the optimal risk we want to estimate.
On a queried round, the normalized deployed loss is
\begin{equation}
Z_t=(Y_t-\widehat Y_t)^2/(4G^2)\in[0,1].
\label{eq:queried-label}
\end{equation}
We define the response-learning gap by
$\rho_t=(\E[(Y_t-\widehat Y_t)^2\mid\cF_t]-W_{t,I_t}^*)/G^2$
when $I_t\ne0$, and set $\rho_t=0$ otherwise. Since the deployed response
belongs to the oracle's coefficient ball, queried rounds satisfy
\begin{equation}
\E[Z_t\mid\cF_t]=W_{t,I_t}^*/(4G^2)+\rho_t/4,\qquad 0\le\rho_t\le4.
\label{eq:upward-bias}
\end{equation}
The first term is the router's target; the second is the error from learning
the response. We cannot observe $\rho_t$ because the optimal risk is unknown.
Instead of estimating each gap, we bound their cumulative size. This
requires a fixed feasible identity model to represent optimal responses
across contexts, as formalized in \cref{ass:response-model}.

\begin{restatable}[Cumulative response error]{lemma}{responsecertificate}
\label{lem:response-certificate}
Under the bounded protocol of \cref{sec:problem} and
\cref{ass:response-model}, use the exact response fits and pooling updates
of \ORUCB{} with $0<\eta\le1/2$. For every $\delta\in(0,1)$, with probability
at least $1-\delta/3$ over the interaction, simultaneously for all $t$,
\begin{equation}
\sum_{s\le t}\rho_s\le E_t,
\label{eq:response-certificate-main}
\end{equation}
where $E_t$ is the query-count bound in \cref{eq:identity-envelope}.
For $t\le T$, it satisfies $E_t\le\overline E_T$, with the explicit
horizon bound $\overline E_T$ given in \cref{eq:weighted-envelope}.
\end{restatable}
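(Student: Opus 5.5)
The plan is to convert the cumulative response gap into a cumulative excess-loss comparison against the fixed feasible identity model of \cref{ass:response-model}, and then control that comparison with the two online-learning lemmas already stated: \cref{lem:pool} for the pooling step and \cref{lem:constrained-regret} for the constrained identity fit. I read \cref{ass:response-model} as providing, for each identity $j$, a matrix $\Theta_j^*$ with $\|\Theta_j^*\|_F\le1$ such that $a^*_{t,j}=\Theta_j^{*\top}\ell_{t,j}$ attains the minimum defining $W_{t,j}^*$ on every round.

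\textbf{Step 1: a gap identity.} Since $a^*_{t,j}$ attains $W_{t,j}^*$ and $a_{t,I_t}$ is $\cF_t$-measurable, on each queried round
\[
G^2\rho_t=\E\bigl[(e_{t,j}-Ga_{t,j}^\top\phi_q(d_{t,j}))^2-(e_{t,j}-Ga^{*\top}_{t,j}\phi_q(d_{t,j}))^2\,\big|\,\cF_t\bigr],\qquad j=I_t .
\]
So $\rho_t$ is the conditional mean of the realized excess normalized loss $X_t=4(\mathcal L_t^{\rm mix}-\mathcal L_t^{*})$, where $\mathcal L^{\rm mix}_t$ and $\mathcal L^*_t$ are the losses of the deployed mixture and of the oracle coefficients, each normalized as $\mathcal L^u_t$ in the pooling step. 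On a queried round $|X_t|\le4$.

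\textbf{Step 2: realized excess loss.} For each identity $j$, I chain the two lemmas over the rounds in which $j$ was queried.
\begin{itemize}
\item By \cref{lem:pool}, the mixture's cumulative squared loss is at most that of the identity proposals $a^H$, plus $4G^2\log2/\eta$.
\item By \cref{lem:constrained-regret}, which applies because $\Theta_j^*$ lies in the constraint set of \eqref{eq:identity-response}, the cumulative loss of $a^H$ exceeds that of $\Theta_j^*$ by at most an $O(d_\ell(q+1)\log(1+n_{t,j}/\lambda_I)+\lambda_I)$ term. Here $n_{t,j}$ is the number of queries to $j$.
\end{itemize}
This comparison holds pathwise, because the data are adaptive. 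Summing over identities gives $\sum_{s\le t}X_s\le B_t$, a deterministic function of the query counts, summed over at most $K$ identities.

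\textbf{Step 3: from realized to conditional.} Write $\sum_s\rho_s=\sum_s X_s+\sum_s(\rho_s-X_s)$. The second sum is a martingale with bounded increments. Freedman's inequality with a stitching or peeling argument over $t$ gives, with probability at least $1-\delta/3$ for all $t$ at once, a bound of the form $\sqrt{V_t\log(\cdot/\delta)}+\log(\cdot/\delta)$, where $V_t$ is the conditional variance.

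The key point is that the variance of an excess squared loss is bounded by a constant times its mean. Expanding the difference of squares,
\[
X_s=(\Delta^*-\Delta)(2e-\Delta-\Delta^*)/G^2\cdot\text{const},
\]
so $\E[X_s^2\mid\cF_s]\lesssim\E[(\Delta_s-\Delta_s^*)^2\mid\cF_s]/G^2$. I then need $\E[(\Delta-\Delta^*)^2\mid\cF_s]\lesssim G^2\rho_s$. This follows from the first-order optimality of $a^*$ over the convex ball $\mathbb B_q$: the conditional risk is a convex quadratic in $a$, so its excess over the minimum dominates the squared distance measured in the induced seminorm. This argument never needs invertibility, which is why singular answer distributions are allowed.

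This yields $V_t\le c\sum_s\rho_s$. Solving the resulting quadratic inequality in $\sqrt{\sum_s\rho_s}$ gives $\sum_s\rho_s\le 2B_t+c'\log(\cdot/\delta)=:E_t$. The horizon bound then follows from monotonicity in the counts together with $\sum_j n_{t,j}\le T$: concavity of the logarithm gives $E_t\le\overline E_T$.

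\textbf{Main obstacle.} I expect Step 3 to be the hardest part, namely obtaining variance proportional to the mean so that the bound is logarithmic rather than $\sqrt T$, and doing so uniformly in $t$. My first attempt will be Freedman's inequality on a geometric grid of variance levels, combined with the optimality-condition inequality $\E[(\Delta-\Delta^*)(\Delta^*-e)\mid\cF]\ge0$ over the ball. A second issue is the exact form of \cref{lem:constrained-regret}: if it is stated for the normalized losses $\mathcal L$ rather than the raw squared losses, I will carry the factor $4G^2$ through consistently.
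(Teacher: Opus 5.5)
Your plan follows the paper's proof step for step in Steps 1, 2 and the final concavity step. That includes the identity $\rho_t=\E[X_t\mid\cF_t]$ from the response-model assumption, the pathwise chain of the pooling lemma (against the $H$ proposal) with the constrained-regression lemma (against $\Theta_j^*$) giving $\sum_{s\le t}X_s\le A_t$, and the variational-inequality bound $\E[X_t^2\mid\cF_t]\le16\rho_t$.

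The only divergence is in Step 3, where you work harder than necessary. Because the conditional variance is at most a constant times the conditional mean, you do not need peeling over variance levels. Fix $\lambda=1/32$. With $|X_t|\le4$, the bound $e^u\le1+u+\tfrac12u^2e^{|u|}$ gives $\E[e^{-X_t/32}\mid\cF_t]\le1-\rho_t/64\le e^{-\rho_t/64}$. Hence $\exp\{\sum_{s\le t}\rho_s/64-\sum_{s\le t}X_s/32\}$ is a nonnegative supermartingale. Ville's inequality at level $\delta/3$ then gives $\sum_{s\le t}\rho_s\le2\sum_{s\le t}X_s+64\log(3/\delta)$ for all $t$ simultaneously, which is exactly $E_t=2A_t+64\log(3/\delta)$.

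Your Freedman-plus-peeling route would also give a valid logarithmic bound of the same shape. However, the union over the variance grid adds an iterated-logarithm term and unspecified constants. It would therefore prove a variant of the lemma rather than the stated $E_t$. It would also not yield the $T$-free logarithmic term in $\overline E_T$ that the calibration relies on.
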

Appendix~\ref{app:response-certificate-proof} proves the lemma by combining
the constrained-fitting and pooling comparisons, then converting realized
loss differences to conditional risk differences.
On this event, $G^2E_t$ bounds the cumulative conditional excess prediction
risk caused by learning the responses. It grows only logarithmically with the
number of purchases when the dimensions, registry capacity, regularizers,
pool rate and $\delta$ are fixed. The calibrated router uses the deterministic
horizon bound $\overline E_T$ defined in \cref{sec:theory}, not the
unknown individual gaps.

\subsection{Choosing an action}
We now estimate whether an expert's optimal queried risk plus its fee
can beat free prediction. The response-error bound lets us account for
the bias in purchased losses when making this comparison.

\textbf{Risk estimates.}
We use public features
$x_t^0\in\R^{d_0}$ and $c_{t,j}\in\R^{d_c}$ of norm at most one.
Unlike $\ell_{t,j},s_{t,j}$, which determine correction coefficients,
these features predict free and optimal queried risk, respectively.
All four feature vectors are known under
$\cF_t$ and cannot use the current answer; they may use the same public input.
The guarantee assumes that both normalized risks are linear in these
features (\cref{ass:risk-model}); this is separate from the response-model
assumption above. The free loss is observed every round, whereas $Z_t$ is
available only after a query.
With positive-definite priors $\Lambda_0,\Lambda_*$ and
$c_s=c_{s,I_s}$ on queried rounds, we use the full-history ridge estimates
below, where $\omega_s$ is a sample weight defined next:
\begin{align}
V_t^0&=\Lambda_0+\sum_{s<t}x_s^0(x_s^0)^\top,&
\widehat\theta_t^0&=(V_t^0)^{-1}\sum_{s<t}x_s^0\frac{(Y_s-h_s)^2}{R^2},
\nonumber\\
V_t^*&=\Lambda_*+\sum_{s<t:I_s\ne0}\omega_sc_sc_s^\top,&
\widehat\theta_t^*&=(V_t^*)^{-1}\sum_{s<t:I_s\ne0}\omega_sc_sZ_s.
\label{eq:risk-estimators}
\end{align}
The design matrices $V_t^0,V_t^*$ record the observed feature directions;
the priors regularize directions with little data.

\textbf{Confidence weighting.}
To turn the cumulative error bound into a useful confidence bound, we also
control each observation's influence. A biased observation in a poorly
measured direction can strongly affect later risk estimates.
As in linear UCB
\citep{abbasi2011improved}, we measure uncertainty in the queried-risk
direction by
$u_{t,j}=\sqrt{c_{t,j}^\top(V_t^*)^{-1}c_{t,j}}$.
For a threshold $\tau>0$ fixed before interaction, we assign each queried
round a weight before observing the selected answer:
\begin{equation}
\omega_t=\min\{1,\tau/u_{t,I_t}\},
\label{eq:query-weight}
\end{equation}
using one at zero width. Whereas $w^H_{t,j},w^P_{t,j}$ mix response
coefficients, $\omega_t$ weights one queried-risk observation.
This confidence-weighting rule comes from corrupted linear bandits
\citep{he2022nearly}. It caps the product $\omega_tu_{t,I_t}$ at $\tau$,
so the total response error in \cref{lem:response-certificate} controls its
influence on the risk estimate. \Cref{eq:main-bias-contraction} gives the
bound; Appendix~\ref{app:risk-confidence} proves the resulting confidence
guarantee. Both the design and target sum use the same sample weight.
Response fits and proposal-loss updates remain unweighted: they learn the
correction directly, rather than treating its current loss as optimal risk.

\textbf{Acquisition.}
We express the forecasts in squared-loss units so they can be compared
with fees:
$\widehat L_t^0=R^2\clip_{[0,1]}((x_t^0)^\top\widehat\theta_t^0)$ and
$\widetilde W_{t,j}=4G^2c_{t,j}^\top\widehat\theta_t^*$.
Clipping truncates to the indicated interval and cannot increase the free
forecast's error because $L_t^0\in[0,R^2]$. The queried forecast stays
unclipped, as used in its confidence bound.

For a confidence multiplier $\alpha\ge0$, we use the acquisition scores
\begin{equation}
S_{t,j}=\widehat L_t^0-\widetilde W_{t,j}-\beta_{t,j}
+4\alpha G^2u_{t,j},\qquad S_{t,0}=0,\qquad j\in\cA_t.
\label{eq:score}
\end{equation}
The first three terms estimate the saving from consultation, including its
fee. The bonus favors queries whose optimal risk could be lower than the
current forecast. Under the assumptions and calibration in
\cref{thm:main-regret}, $\widetilde W_{t,j}-4\alpha G^2u_{t,j}$ is a
lower confidence bound for $W_{t,j}^*$, accounting for both label noise
and response-learning bias. No exploration bonus is needed for free
prediction because its loss is observed under every action; its estimation
error is still included in the regret bound.

We choose the largest score, preferring free prediction on an exact tie and
then the earliest registered expert. Thus we query only when the selected
expert's score is positive. We deploy its frozen pooled response
as in \cref{eq:deployed-prediction}, or use $h_t$ if no query is selected.
After observing $Y_t$, we update the free-risk fit on every round and the
queried-risk fit, response fits and selected identity's pool weights only
after a purchase.

The theorem couples $\alpha$ and $\tau$ through
\cref{eq:main-calibration}; the benchmarks retain the same updates but
select these constants independently by cost on the validation block.
Appendix~\ref{app:implementation} gives the benchmark
feature construction, Algorithm~\ref{alg:optimal-risk}, and the recursive
implementation.

%% file: sections/04_theory.tex
\section{Theory}
\label{sec:theory}
Can we learn which answers are worth purchasing while still learning how
to correct them? We answer this question by bounding regret against the
conditional response oracle in \cref{eq:regret}. The guarantee includes
both query mistakes and the prediction error incurred while learning the
responses, rather than assuming that the corrections are already known.

\subsection{Model assumptions}
We use one model to learn optimal corrections across contexts and another
to estimate their risk before purchasing an answer. Both assumptions hold
almost surely for every round and eligible expert; public contexts and
availability may depend on history.

\begin{assumption}[A feasible model of optimal responses]
\label{ass:response-model}
For every identity $j$, there is a fixed
$\Theta_j^*\in\R^{d_\ell\times(q+1)}$ with $\|\Theta_j^*\|_F\le1$ such that
$(\Theta_j^*)^\top\ell_{t,j}$ minimizes the conditional queried risk over the
entire ball $\mathbb B_q$ in \cref{eq:risks}.
\end{assumption}
We model the \emph{optimal coefficients} across contexts with one fixed
matrix per identity. This connects the fixed-comparator guarantee of
online regression \citep{hazan2023online} to the conditional response
oracle. The conditional mean residual itself need not be polynomial:
the model represents a best correction within the prescribed ball.
The guarantee also permits a misspecified shared model and singular
answer covariance. Appendix~\ref{app:assumption-context} relates this
condition to realizability in contextual bandits.

\begin{assumption}[Linear free and optimal queried risks]
\label{ass:risk-model}
Fixed parameters $\theta_0^*\in\R^{d_0}$ and $\theta_*\in\R^{d_c}$ satisfy
\begin{equation}
L_t^0/R^2=(x_t^0)^\top\theta_0^*,\qquad
W_{t,j}^*/(4G^2)=c_{t,j}^\top\theta_*.
\label{eq:linear-risk-model}
\end{equation}
Known finite constants $Q_0,Q_*$ bound
$\|\theta_0^*\|_{\Lambda_0}\le Q_0$ and $\|\theta_*\|_{\Lambda_*}\le Q_*$,
where $\|v\|_\Lambda=\sqrt{v^\top\Lambda v}$.
\end{assumption}
Fixed linear conditional means, bounded features and parameter-norm bounds
are used in OFUL \citep{abbasi2011improved}, linear Thompson sampling
\citep{agrawal2013thompson} and confidence-weighted linear bandits
\citep{he2022nearly}. We apply this structure to free risk and risk
\emph{after optimizing the response}; the latter is modeled separately
from the optimal coefficients. Both models allow context dependence
through their public features. For example, take $h=0$, $Y$ uniform on
$\{-0.3,0.3\}$, and an expert answering $M=Y+0.2+0.1x$ for public
$x\in[-1,1]$. With $q=3$, $R=0.3$, $D=0.7$, and
$\ell=(1,x)^\top/\sqrt2$, its optimal correction $-0.2-0.1x$ has one fixed
coefficient matrix with squared Frobenius norm $0.247<1$. The free and
optimal queried risks are $0.09$ and zero. Appendix~\ref{app:contextual-example}
gives the matrix and risk features satisfying both assumptions.

\subsection{Confidence calibration}
We choose the exploration multiplier $\alpha$ and weight threshold $\tau$
to cover two sources of estimation error: centered observation noise and
the upward bias from unfinished responses in \cref{eq:upward-bias}.
For a horizon $T\ge1$ and failure probability $\delta\in(0,1)$,
\cref{lem:response-certificate} gives
$\sum_{t\le T}\rho_t\le\overline E_T$ with probability at least
$1-\delta/3$, using the following bound known before interaction:
\begin{equation}
\overline E_T=2\left[K\lambda_I+8Kp_I
\log\left(1+\frac{T}{Kp_I\lambda_I}\right)
+\frac{4K\log2}{\eta}\right]+64\log\frac3\delta.
\label{eq:weighted-envelope}
\end{equation}
Here $p_I=d_\ell(q+1)$ counts coefficients per identity. The terms account
for fitting the identity responses, pooling the two proposals, and
converting observed loss comparisons to conditional risk comparisons
(Appendix~\ref{app:response-budget}).

For centered noise and regularization, self-normalized regression bounds
\citep{abbasi2011improved} give confidence radii $B_0$ for free risk and
$B_*$ for queried risk.
For $v\in\{0,*\}$, set $\lambda_v=\lambda_{\min}(\Lambda_v)>0$ and
$d_*=d_c$, and take
\begin{equation}
B_v=\tfrac12\sqrt{d_v\log\left(1+\frac{T}{d_v\lambda_v}\right)
                    +2\log(3/\delta)}+Q_v.
\label{eq:main-noise-radius}
\end{equation}
The square-root term covers noise; $Q_v$ covers regularization bias.
For queried risk, the weights cap $\omega_tu_{t,I_t}$ at $\tau$
\citep{he2022nearly}. The response-learning contribution to the radius is
therefore at most $\tau\overline E_T/4$
(\cref{eq:main-bias-contraction}). We balance this bound with $B_*$:
\begin{equation}
\tau=\frac{4B_*}{\overline E_T},\qquad
\alpha=B_*+\frac{\tau\overline E_T}{4}=2B_*.
\label{eq:main-calibration}
\end{equation}
On the joint confidence event,
$\widetilde W_{t,j}-4\alpha G^2u_{t,j}$ is then a lower confidence bound
for optimal queried risk, accounting for both sources of error.
Appendix~\ref{app:risk-confidence} proves the simultaneous confidence
statement.

\subsection{Regret guarantee}
\begin{restatable}[Regret of response-aware routing]{theorem}{routingregret}
\label{thm:main-regret}
Assume the bounded non-interventional protocol of \cref{sec:problem} and
\cref{ass:response-model,ass:risk-model}. Fix $T\ge1$, $\delta\in(0,1)$,
feature maps and positive-definite priors before interaction; public
feature vectors may depend on history. Run \ORUCB{} with exact constrained
response fits, full-history weighted queried-risk estimates, $0<\eta\le1/2$
and \cref{eq:main-calibration}. With probability at least $1-\delta$
over the joint interaction law,
\begin{equation}
\Reg_T\le C_{\rm prob}
\left\{\sqrt T\log\frac{T+1}{\delta}+\log\frac{T+1}{\delta}\right\},
\label{eq:main-rate}
\end{equation}
where $C_{\rm prob}$ is finite and depends only on the residual bounds,
dimensions, registry capacity, regularizers, priors, parameter-norm bounds
and pool rate, not on $T$ or $\delta$.
\end{restatable}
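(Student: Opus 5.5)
The plan is to work on the intersection of three events, each of probability at least $1-\delta/3$: the response-error event of \cref{lem:response-certificate}, giving $\sum_{t\le T}\rho_t\le\overline E_T$; a self-normalized confidence event for the free-risk ridge estimate, giving $\|\widehat\theta_t^0-\theta_0^*\|_{V_t^0}\le B_0$ for all $t$; and a weighted self-normalized event for the centered queried-risk noise. For the noise, note that $Z_s-\E[Z_s\mid\cF_s]$ is a $[-1,1]$-bounded martingale difference (hence $1/2$-sub-Gaussian), and $\omega_s$ is $\cF_s$-measurable. The weights $\omega_s\le1$ keep the weighted design comparable to the unweighted one, so the determinant bound yields the radius $B_*$ of \cref{eq:main-noise-radius}. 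A union bound gives probability at least $1-\delta$.

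\textbf{Step 1: confidence bound with response bias.} Write $\widehat\theta_t^*-\theta_*=(V_t^*)^{-1}[\sum\omega_sc_s\xi_s-\Lambda_*\theta_*+\sum\omega_sc_s\rho_s/4]$, using \cref{eq:upward-bias} and \cref{ass:risk-model}. The first two terms contribute at most $B_*$ in $\|\cdot\|_{V_t^*}$. For the bias term, apply the triangle inequality: $\|(V_t^*)^{-1}\sum\omega_sc_s\rho_s/4\|_{V_t^*}\le\sum_s\omega_s\rho_s\|c_s\|_{(V_t^*)^{-1}}/4$. Since $V_t^*\succeq V_s^*$ for $s<t$, we have $\|c_s\|_{(V_t^*)^{-1}}\le u_{s,I_s}$, and \cref{eq:query-weight} gives $\omega_su_{s,I_s}\le\tau$. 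The bias is therefore at most $\tau\overline E_T/4$ (\cref{eq:main-bias-contraction}). Hence $|c_{t,j}^\top(\widehat\theta_t^*-\theta_*)|\le(B_*+\tau\overline E_T/4)u_{t,j}=\alpha u_{t,j}$. The bias is one-signed, which matters only for optimism, not for this bound. Multiplying by $4G^2$ gives $W^*_{t,j}\in[\widetilde W_{t,j}-4\alpha G^2u_{t,j},\widetilde W_{t,j}+4\alpha G^2u_{t,j}]$. Similarly, $|\widehat L_t^0-L_t^0|\le R^2B_0\|x_t^0\|_{(V_t^0)^{-1}}$, since clipping only helps.

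\textbf{Step 2: per-round regret decomposition.} The instantaneous regret is $\E[C_t\mid\cF_t]-C_t^*=(\text{cost}^*_{I_t}-C_t^*)+G^2\rho_t$, where $\text{cost}^*_0=L^0_t$ and $\text{cost}^*_j=W^*_{t,j}+\beta_{t,j}$. The $\rho$ part sums to $G^2\overline E_T$, which is $O(\log(T/\delta))$. For the routing part, let $i^*$ be the oracle action and define the optimistic index $S^*_{t,j}=L^0_t-W^*_{t,j}-\beta_{t,j}$. Step 1 gives $S_{t,j}\ge S^*_{t,j}-e^0_t$, with $e^0_t=R^2B_0\|x^0_t\|_{(V^0_t)^{-1}}$, and $S_{t,j}\le S^*_{t,j}+e^0_t+8\alpha G^2u_{t,j}$. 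Since $I_t$ maximizes $S_{t,\cdot}$, the usual UCB chain gives $\text{cost}^*_{I_t}-C^*_t\le 2e^0_t+8\alpha G^2u_{t,I_t}\one\{I_t\ne0\}$. Actions outside $\cA_t$ are harmless because they are never optimal.

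\textbf{Step 3: summing widths (main obstacle).} First, $\sum_t\|x^0_t\|_{(V^0_t)^{-1}}\le\sqrt{T\cdot 2d_0\log(1+T/(d_0\lambda_0))}$ by the elliptical potential lemma, using $\min\{1,\cdot\}$ handling via $\lambda_0$ and boundedness of per-round regret by $R^2+4G^2$. The hard part is $\sum u_{t,I_t}$, because the design is weighted. Split queried rounds by whether $\omega_t=1$ or $\omega_t<1$. On rounds with $\omega_t=1$, $V^*$ receives an unweighted update, and Cauchy–Schwarz plus the elliptical potential gives $O(\sqrt{Td_c\log T})$. On rounds with $\omega_t=\tau/u_t<1$, we have $u_t=\tau^{-1}\omega_tu_t^2$, and $\omega_tu_t^2=\|\sqrt{\omega_t}c_t\|^2_{(V^*_t)^{-1}}$ is summable by the potential lemma (capped at $1$, using $\omega_tu_t^2\le\tau u_t$ and $u_t\le\lambda_*^{-1/2}$). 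This yields $O(d_c\log T/\tau)=O(\overline E_Td_c\log T/B_*)$, following \citet{he2022nearly}. Since $\alpha=2B_*=O(\sqrt{\log(T/\delta)})$ and $\overline E_T=O(\log(T/\delta))$, the total routing regret is $O(\sqrt{T}\log((T+1)/\delta)+\log^2)$. To reach exactly the stated form, I would absorb the $\alpha\cdot\overline E_T\log T/B_*=O(\overline E_T\log T)$ term carefully. It is $O(\log T\cdot\log(1/\delta))$, which is dominated by the $\sqrt T\log$ term when $T\ge\log T$, with small $T$ handled by the trivial per-round bound. This bookkeeping of the logarithmic factors is where I expect most of the care to be needed.
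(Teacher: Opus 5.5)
Your proposal is correct and follows essentially the same route as the paper. Both arguments use a three-way union bound over the response certificate and the two self-normalized events, the bias contraction $\tfrac14\sum_{s<t}\omega_s\rho_s u_{s,I_s}\le\tau\overline E_T/4=B_*$ that justifies $\alpha=2B_*$, and an optimistic one-round comparison giving $16G^2B_*u_{t,I_t}+G^2\rho_t$ plus a free-risk error term. Both then split the queried widths into $\omega_s=1$ and $\omega_s<1$ rounds to get $\sqrt{N_T\mathcal H(N_T)}+\mathcal H(N_T)/\tau$, and use $\log(T+1)\le\sqrt T$ to absorb the $\overline E_T\mathcal H(N_T)$ term.

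The differences are only cosmetic. You pay $2e^0_t$ where the paper's case analysis gets a single $|\epsilon^0_t|$, because the free-risk error cancels when both actions query. The $1/2$-sub-Gaussian constant holds because $Z_s\in[0,1]$ has unit range, not because the centered noise lies in $[-1,1]$. Finally, for $\lambda_0<1$ the paper uses the factor $1+\lambda^{-1}$ in the elliptical-potential sum instead of your capping argument.
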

For fixed problem parameters and $\delta=(T+1)^{-2}$, the bound is
$O(\sqrt T\log(T+1))$. Bounded one-round regret also gives vanishing
expected average regret. Thus the cost of learning the corrections and
query decisions is sublinear relative to the oracle, even though only
purchased answers train the responses. Appendix~\ref{app:finite-horizon}
gives a fully explicit bound with sharper dependence on query counts.

\textbf{Proof idea.}
\Cref{lem:response-certificate} controls the total error from learning
responses. The calibrated weights prevent this error from invalidating
the risk confidence bounds. Optimistic acquisition then compares each
chosen action with the oracle; summing the estimation and response errors
gives the regret bound.
Appendix~\ref{app:optimal-risk-proof} gives the proof.

\textbf{Comparison with direct deferral.}
Along the same public history, the learner's cumulative conditional cost
minus that of the direct-deferral oracle equals $\Reg_T$ minus the
cumulative oracle gain from correction. On the theorem's event, the learner
therefore improves on direct deferral whenever this gain exceeds the
regret bound. If the average oracle gain stays bounded away from zero,
the linear cumulative benefit eventually exceeds the sublinear learning cost.
Appendix~\ref{app:routing-gain} characterizes when correction
produces such a gain.
For approximate response and risk models, Appendix~\ref{app:approximate-models}
bounds the additional regret using supplied approximation-error bounds.

%% file: sections/05_experiments.tex
\section{Experiments}
\label{sec:experiments}
\begingroup
\setlength{\intextsep}{6pt plus1pt minus1pt}
\setlength{\textfloatsep}{6pt plus1pt minus1pt}
\setlength{\abovecaptionskip}{4pt}
\setlength{\belowcaptionskip}{3pt}
We examine (i) the cost of learning to query and correct answers against
seven external baselines on four streams (\cref{tab:main-costs});
(ii) sensitivity to six consultation prices
(\cref{tab:main-price-comparison}); and (iii) routing with a common
response learner, together with learning and query-rate curves, in
Appendix~\ref{app:experiment-details}.

\textbf{Setting.}
Synthetic regimes control when free prediction, direct deferral, or correction is useful. Melbourne daily
minimum temperatures \citep{daily_min_temp_melbourne} pair real outcomes with
simulated sensors to control answer bias and noise. On Jena
\citep{jena_climate} and Delhi \citep{daily_delhi_climate}, experts are frozen
random forests, extra trees, and gradient-boosted trees; Jena also uses
radial-basis kernel ridge. Jena uses contemporaneous
weather covariates; Melbourne and Delhi use past observations and calendar
features. Availability schedules include expert absence and return.

We compare SleepingLinUCB and SharedLinUCB, our available-action LinUCB
adaptations \citep{li2010contextual,kleinberg2010sleeping};
D-LinUCB \citep{russac2019weighted}; LinTS \citep{agrawal2013thompson};
NeuralUCB \citep{zhou2020neuralucb}; and CUSUM-LinUCB and GLR-LinUCB,
linear adaptations of the change detectors of \citet{liu2018change} and
\citet{besson2022efficient}. For each method and dataset, we select from
$256$ configurations by validation cost and report fee-inclusive cost on
separate test blocks. Appendices~\ref{app:experimental-setup}
and~\ref{app:data-construction} describe evaluation, warmup, features, and experts.

\begin{table}[H]
\centering\small
\setlength{\tabcolsep}{3pt}
\caption{Mean test cost, including consultation fees. Entries show
mean $\pm$ sample standard deviation across five learner seeds. Lower is better.}
\label{tab:main-costs}
\resizebox{.96\linewidth}{!}{\input{output/pdf/final_weighted/tables/cost_table}}
\end{table}

\textbf{Results.}
\ORUCB{} has lower mean test cost than every external baseline on all
four streams. At benchmark fees, it queries on every scored round on
Jena and Delhi. The six-price Jena study then tests acquisition as fees
increase. Our method has the lowest cost at all six prices
(\cref{tab:main-price-comparison}), with equal search budgets per price.
Across these price-specific configurations, its query fraction falls from
$96\%$ to $0.1\%$, spanning frequent consultation to near-free prediction.

\begin{table}[H]
\centering\small
\setlength{\tabcolsep}{3pt}
\caption{Mean Jena test cost at six consultation prices. Columns scale the base fee
$(\approx 0.17)$. Entries show mean $\pm$ sample standard deviation over
five learner seeds.  Bold marks the lowest mean per column.
Fees and query percentages are in Appendix~\ref{app:fee-sensitivity}.}
\label{tab:main-price-comparison}
\resizebox{\linewidth}{!}{\input{tables/main_price_comparison}}
\end{table}

\textbf{Extra experiments.}
Appendix~\ref{app:benchmark-queries} gives full learning trajectories, and
Appendix~\ref{app:fee-sensitivity} reports cost and query-rate curves.
Appendix~\ref{app:jena-routing} holds the cubic correction learner,
pooling, and memory fixed while comparing joint and disjoint UCB routers.
Joint response UCB uses our risk features with unit-weight regression;
disjoint response UCB fits a separate risk model per expert. Each selects
its confidence and risk regularization from $256$ configurations per stream.
Our router matches joint response UCB on three streams and improves on
both alternatives on Delhi.
\FloatBarrier
\endgroup

%% file: output/pdf/final_weighted/tables/cost_table.tex
\begin{tabular}{lcccc}
\toprule
Method & Synthetic & Melbourne & Jena & Delhi \\
\midrule
\ORUCB{} & \textbf{0.08092} & \textbf{4.49069} & \textbf{3.72844} & \textbf{2.59240} \\
SleepingLinUCB & 0.08488 & 4.61400 & 4.03045 & 2.66401 \\
SharedLinUCB & 0.08316 & 4.90137 & 4.02217 & 2.64185 \\
D-LinUCB & 0.08790 & 5.11775 & 4.07844 & 2.63729 \\
LinTS & 0.08609 {\scriptsize$\pm$ 0.00389} & 5.42987 {\scriptsize$\pm$ 0.15762} & 4.11937 {\scriptsize$\pm$ 0.04932} & 2.74807 {\scriptsize$\pm$ 0.04935} \\
NeuralUCB & 0.10183 {\scriptsize$\pm$ 0.00604} & 5.58128 {\scriptsize$\pm$ 0.26329} & 4.18453 {\scriptsize$\pm$ 0.13384} & 2.92431 {\scriptsize$\pm$ 0.20347} \\
CUSUM-LinUCB & 0.10532 {\scriptsize$\pm$ 0.00107} & 5.61922 {\scriptsize$\pm$ 0.12936} & 4.33342 {\scriptsize$\pm$ 0.03940} & 2.79313 {\scriptsize$\pm$ 0.05511} \\
GLR-LinUCB & 0.10321 {\scriptsize$\pm$ 0.00141} & 5.24715 {\scriptsize$\pm$ 0.16709} & 4.34858 {\scriptsize$\pm$ 0.05603} & 2.91288 {\scriptsize$\pm$ 0.05895} \\
\bottomrule
\end{tabular}

%% file: tables/main_price_comparison.tex
\begin{tabular}{lcccccc}
\toprule
Method & $\times1$ & $\times3$ & $\times5$ & $\times7.5$ & $\times12.5$ & $\times50$ \\
\midrule
\ORUCB{} & \textbf{3.711} & \textbf{3.984} & \textbf{4.024} & \textbf{4.079} & \textbf{4.117} & \textbf{4.174} \\
SleepingLinUCB & 4.030 & 4.277 & 4.240 & 4.444 & 4.417 & 4.190 \\
SharedLinUCB & 4.022 & 4.201 & 4.180 & 4.242 & 4.202 & 4.182 \\
D-LinUCB & 4.078 & 4.371 & 4.447 & 4.452 & 4.485 & 4.552 \\
LinTS & 4.119 {\scriptsize$\pm$ 0.049} & 4.167 {\scriptsize$\pm$ 0.040} & 4.177 {\scriptsize$\pm$ 0.036} & 4.233 {\scriptsize$\pm$ 0.044} & 4.250 {\scriptsize$\pm$ 0.065} & 4.182 \\
NeuralUCB & 4.185 {\scriptsize$\pm$ 0.134} & 4.223 {\scriptsize$\pm$ 0.108} & 4.190 {\scriptsize$\pm$ 0.069} & 4.169 {\scriptsize$\pm$ 0.045} & 4.189 {\scriptsize$\pm$ 0.020} & 4.182 \\
CUSUM-LinUCB & 4.333 {\scriptsize$\pm$ 0.039} & 4.331 {\scriptsize$\pm$ 0.073} & 4.328 {\scriptsize$\pm$ 0.030} & 4.362 {\scriptsize$\pm$ 0.050} & 4.415 {\scriptsize$\pm$ 0.051} & 4.666 {\scriptsize$\pm$ 0.048} \\
GLR-LinUCB & 4.349 {\scriptsize$\pm$ 0.056} & 4.489 {\scriptsize$\pm$ 0.038} & 4.557 {\scriptsize$\pm$ 0.021} & 4.574 {\scriptsize$\pm$ 0.066} & 4.641 {\scriptsize$\pm$ 0.049} & 5.113 {\scriptsize$\pm$ 0.085} \\
\bottomrule
\end{tabular}

%% file: sections/06_conclusion.tex
\vfill
\section{Conclusion}
\label{sec:discussion}
We studied online learning to defer when a purchased answer can be
corrected before deployment. \ORUCB{} learns these corrections and accounts
for their learning error when deciding which answers to buy. Under the
stated response and risk models, the calibrated algorithm achieves
sublinear pseudo-regret against the conditional response oracle; an
extension quantifies the effect of approximation errors. On four test
streams, the selected cubic policy has lower cost than seven direct-deferral
baselines. With the correction learner held fixed, our router matches or
improves on the tested UCB alternatives. Six-price comparisons measure
how consultation cost and query rates vary with fees.
\par
\kern9.075pt

%% file: sections/07_ai_use.tex
\subsection*{AI use statement}
\label{sec:ai-use}
We used large language model assistants to polish the writing, check
mathematical arguments and proofs. We were assisted by AI to develop the nonlinear-correction
example in Appendix~\ref{app:orthogonal-separation}. We have also generated the synthetic data with AI and checked
very carefully its soundness. We also used these
tools to implement and debug methods and experimental code. We reviewed the AI-assisted material and take full
responsibility for the final text, mathematical claims, code, and results.

%% file: appendix/a_protocol_and_dimensions.tex
\section{Probability space and timing}
\label{app:protocol}
\subsection{Information and timing}
\Cref{fig:consultation-intuition} summarizes the timing described in
\cref{sec:problem}. The distinction between choosing a function and
evaluating it explains how a prediction can depend on a purchased answer
even though its response coefficients are fixed before purchase.
\begin{figure}[H]
\centering
\input{figures/consultation_intuition}
\caption{Consultation with a learned response, with round and expert
subscripts suppressed. The solid path shows our queried action; the dashed
branch shows standard deferral for comparison. The entire function $\Delta$
is fixed before observing $M$; its value is evaluated afterward.
Free prediction, omitted here, uses $h$ without paying a fee.}
\label{fig:consultation-intuition}
\end{figure}

All random variables are defined on a common probability space admitting
regular conditional laws for the finite-dimensional observations. We choose
conditional versions simultaneously over the countable rounds and finite
registry. The pre-query field $\cF_t$ includes completed outcomes, purchased
answers, algorithm state, current public features, $h_t$, availability and
fees. The decision and response coefficients are $\cF_t$-measurable. For a
queried action, the intermediate field is
$\mathcal G_t=\cF_t\vee\sigma(M_{t,I_t})$; on a free round it is $\cF_t$.
The deployed prediction is $\mathcal G_t$-measurable, and $Y_t$ is revealed
afterward.
The next pre-query field contains $\mathcal G_t$ and $Y_t$. Thus a purchased
answer can enter subsequent response fits, while the current identity and
response function are chosen before that answer is observed.

Conditional on $\cF_t$, a common joint law specifies $Y_t$ and all potential
expert answers. The current action reveals coordinates of this vector
without changing its law. Neither the expert answers nor the rounds need to
be independent. An internal predictor may update causally on every outcome,
provided its current $h_t$ is fixed before routing and the stated model
assumptions continue to hold. A returning identity keeps its learned state;
the guarantee requires its conditional law on return to satisfy the same
fixed parameter models.

To define the conditional oracle, we need a measurable choice of response.
For fixed $t,j$, its objective is a continuous quadratic on the compact ball
$\mathbb B_q$, with coefficients given by measurable conditional moments.
The measurable maximum theorem \citep{aliprantis2006infinite} therefore gives
a measurable optimizer. Minimizing over a finite eligible set with a fixed tie
rule gives a measurable oracle action. We use these versions throughout the
proofs. Because rounds and identities are countable, the boundedness
assumptions may be imposed on a common probability-one event.

The fee screen preserves the oracle value. Since $0\le L_t^0\le R^2$,
every query with
$\beta_{t,j}\ge R^2$ has conditional cost at least $R^2$ even if its prediction
loss is zero. Excluding those actions preserves the oracle value, including
boundary ties. The learner consequently incurs at most $4G^2+R^2$ per round,
and its pseudo-regret is nonnegative because its deployed coefficient belongs
to the oracle's response ball.

\clearpage
\subsection{Notation guide}
The main notation is collected below. Public quantities, including the
features and response coefficients, are known given $\cF_t$; the outcome
and current expert answers are still hidden. Superscripts $H,P$ distinguish
the identity and shared response proposals. Superscripts $0,*$ distinguish
the free-risk and optimal queried-risk regressions.

\begin{table}[H]
\centering\small
\renewcommand{\arraystretch}{1.2}
\caption{Decisions, responses, and costs. All risks exclude consultation fees.}
\label{tab:notation-protocol}
\begin{tabular}{lp{.66\linewidth}}
\toprule
Symbol & Meaning\\
\midrule
$K,T$ & Registry capacity and interaction horizon.\\
$\cF_t$ & Information available before the round-$t$ query.\\
$\cE_t,\cA_t$ & Available identities and the eligible subset with fees below $R^2$.\\
$I_t,\beta_{t,j}$ & Chosen action ($0$ means free prediction) and the fee for expert $j$.\\
$h_t,M_{t,j},Y_t$ & Internal prediction, potential expert answer, and outcome.\\
$d_{t,j},e_{t,j}$ & Disagreement $M_{t,j}-h_t$ and expert residual $Y_t-M_{t,j}$.\\
$R,D,G$ & Bounds on $|Y_t-h_t|$, $|d_{t,j}|$, and $|e_{t,j}|$, with $G=R+D$.\\
$\phi_q,\mathbb B_q$ & Dictionary in $\R^{q+1}$ and its coefficient unit ball.\\
$a^H_{t,j},a^P_{t,j},a_{t,j}$ & Identity proposal, shared proposal, and their deployed mixture.\\
$\widehat Y_t,C_t$ & Deployed prediction and realized loss plus fee.\\
$L_t^0,W_{t,j}^*$ & Free risk and minimum queried risk over $\mathbb B_q$.\\
$C_t^*,\Reg_T$ & Oracle conditional cost and cumulative conditional pseudo-regret.\\
\bottomrule
\end{tabular}
\end{table}

\begin{table}[H]
\centering\small
\renewcommand{\arraystretch}{1.2}
\caption{Learning state and confidence quantities. Dimensions count feature coordinates.}
\label{tab:notation-learning}
\begin{tabular}{lp{.62\linewidth}}
\toprule
Symbol & Meaning\\
\midrule
$\ell_{t,j}\in\R^{d_\ell},\ s_{t,j}\in\R^{d_s}$ & Public features for identity and shared responses.\\
$\Theta_{t,j},\Psi_t$ & Response matrices of sizes $d_\ell\times(q+1)$ and $d_s\times(q+1)$.\\
$w^H_{t,j},w^P_{t,j},\eta$ & Proposal mixture weights and their exponential-weights rate.\\
$x_t^0\in\R^{d_0},\ c_{t,j}\in\R^{d_c}$ & Public features for free and optimal queried risk.\\
$\Lambda_v,V_t^v,\widehat\theta_t^v$ & Prior, pre-round design, and risk estimate, for $v\in\{0,*\}$.\\
$\lambda_v,Q_v,B_v$ & Minimum prior eigenvalue, prior-norm bound, and noise-plus-prior radius.\\
$u_{t,j},\omega_t,\tau$ & Queried-risk width, regression sample weight, and its threshold.\\
$\alpha,S_{t,j}$ & Confidence multiplier and acquisition score.\\
$Z_t,\rho_t$ & Queried loss divided by $4G^2$ and response excess risk divided by $G^2$.\\
$N_{t,j},N_t$ & Queries through round $t$ to identity $j$, and in total.\\
$A_t,E_t,\overline E_T$ & Realized response-loss budget, conditional-risk budget, and deterministic horizon envelope.\\
\bottomrule
\end{tabular}
\end{table}
In particular, $w^H,w^P$ combine responses, while $\omega_t$ weights a
risk-regression observation. Designs $V_t^v$ use rounds strictly before
$t$, whereas $N_{t,j}$ includes round $t$. We write $\mathrm{Id}_r$ for
the $r\times r$ identity matrix to distinguish it from the chosen action $I_t$.

%% file: figures/consultation_intuition.tex
\begin{tikzpicture}[
  x=1cm,y=1cm, font=\small,
  box/.style={draw=black!35, line width=.55pt, rounded corners=2pt,
    align=center, inner xsep=7pt, inner ysep=7pt},
  flow/.style={->, >=stealth, line width=.8pt, draw=black!55},
  stage/.style={font=\small, text=black!65, align=center}
]
\node[stage] at (1.65,.16) {Before purchase};
\node[stage] at (5.6,.16) {Answer revealed};
\node[stage] at (10.55,.16) {Final prediction};

\node[box, text width=2.65cm] (decision) at (1.65,-1.28)
  {Choose expert $j$\\Fix function $\Delta$};
\node[box, text width=1.85cm, fill=black!2] (answer) at (5.6,-1.28)
  {Pay fee $\beta$\\Receive $M$};
\node[box, text width=3.85cm, text=black!75] (direct) at (10.55,-.68)
  {Standard deferral\\$\widehat Y=M$};
\node[box, text width=3.85cm, draw=citationblue, fill=citationblue!4]
  (response) at (10.55,-1.88)
  {Learned response\\$\widehat Y=M+\Delta(M-h)$};

\draw[flow] (decision.east) -- (answer.west);
\draw[flow, dashed, draw=black!45] (answer.east) -- (7.65,-1.28)
  |- (direct.west);
\draw[flow, draw=citationblue] (answer.east) -- (7.65,-1.28)
  |- (response.west);
\draw[black!18, densely dotted] (3.70,-.36) -- (3.70,-2.32);
\node[font=\small, text=black!65] at (6.35,-2.76)
  {After deployment: reveal $Y$ and update for future rounds.};
\end{tikzpicture}

%% file: appendix/e_implementation.tex
\section{Implementation and explicit regret bound}
\label{app:implementation}
This appendix gives the response budget and recursive updates for the
identity model, followed by the connection between our assumptions and
online-learning models, a continuous-context example, and the explicit
finite-horizon regret bound. The proof of that
bound appears in Appendix~\ref{app:optimal-risk-proof}.

\begin{algorithm}[H]
\caption{\ORUCB{}: online updates}
\label{alg:optimal-risk}
\begin{algorithmic}[1]
\Require Priors, response regularizers, $0<\eta\le1/2$,
score multiplier $\alpha\ge0$, and weight threshold $\tau>0$.
\State Initialize all regression sums and query counts to zero, designs to their priors, and each identity's pool weights to $(1/2,1/2)$. Preserve all states during absence.
\For{$t=1,\ldots,T$}
\State Observe $h_t,\cE_t$, fees and public features; register new identities and restore returning states.
\State Set $\cA_t=\{j\in\cE_t:\beta_{t,j}<R^2\}$. Compute the ridge forecasts in \cref{eq:risk-estimators}.
\State For each $j\in\cA_t$, freeze $a^H_{t,j}$ from \cref{eq:identity-response}, $a^P_{t,j}$ from \cref{eq:population-response,eq:population-projection}, and $a_{t,j}$ from \cref{eq:pool}.
\State Compute \cref{eq:score}; set $I_t$ to its first maximizer in the order $(0,\cA_t)$, with experts ordered by registration.
\If{$I_t=0$}
\State Deploy $\widehat Y_t=h_t$.
\Else
\State Freeze $\omega_t$ by \cref{eq:query-weight} with the chosen $\tau$.
\State Pay $\beta_{t,I_t}$, observe $M_{t,I_t}$, and deploy $\widehat Y_t=M_{t,I_t}+Ga_{t,I_t}^\top\phi_q(M_{t,I_t}-h_t)$.
\EndIf
\State Observe $Y_t$; update free ridge with $(x_t^0,(Y_t-h_t)^2/R^2)$.
\If{$I_t\ne0$}
\State Form $Z_t$ from the deployed prediction; add $\omega_tc_{t,I_t}c_{t,I_t}^\top$ to the queried design and $\omega_tc_{t,I_t}Z_t$ to its cross-product.
\State Add the purchased observation to the two response objectives \cref{eq:identity-response,eq:population-response}; update the pool using the frozen proposal losses and increment $N_{t,I_t}$.
\EndIf
\EndFor
\end{algorithmic}
\end{algorithm}

\subsection{The response-learning bound}
\label{app:response-budget}
The bound $E_t$ in \cref{eq:response-certificate-main} accounts for the
loss accumulated while fitting and combining responses. Let
$p_I=d_\ell(q+1)$ be the number of entries in each identity's response matrix.
The constrained least-squares comparison in \cref{lem:constrained-regret}
contributes a regularization term and a logarithmic term for each identity.
The exponential-weights comparison in \cref{lem:pool} adds $4\log2/\eta$
per identity. Their sum is the pathwise bound
\begin{equation}
A_t=K\lambda_I+8p_I\sum_{j=1}^K
\log\left(1+\frac{N_{t,j}}{p_I\lambda_I}\right)
+\frac{4K\log2}{\eta}.
\label{eq:identity-pathwise-budget}
\end{equation}
This is a bound on the cumulative realized loss of the deployed response
relative to the optimal identity response, in normalized units. It is a
function of the observed query counts, rather than a deterministic constant. The
normalization divides
squared losses by $G^2$; the comparison uses the fixed optimal matrices in
\cref{ass:response-model}.
Here $N_{t,j}$ counts queries through round $t$, with all counts initially
zero, and $N_t=\sum_jN_{t,j}$ counts all purchases.
The calibrated router uses the deterministic horizon envelope below
to fix its weighting threshold before interaction.

\Cref{lem:response-conversion} converts this realized-loss bound into a
conditional excess-risk bound. Define
\begin{equation}
E_t=2A_t+64\log(3/\delta).
\label{eq:identity-envelope}
\end{equation}
With probability at least $1-\delta/3$, the inequality
$\sum_{s\le t}\rho_s\le E_t$ holds simultaneously over all prefixes.
Concavity of the logarithm and $\sum_jN_{t,j}\le T$ give the deterministic
horizon envelope
\begin{equation}
\overline E_T=2\left[K\lambda_I+
8Kp_I\log\left(1+\frac{T}{Kp_I\lambda_I}\right)
+\frac{4K\log2}{\eta}\right]+64\log\frac3\delta,
\qquad E_t\le\overline E_T\quad(t\le T).
\end{equation}
This is the envelope used to set the weight threshold in
\cref{eq:main-calibration}.
The factor $3$ reserves the other two parts of the failure probability for
the free and queried risk regressions.

\subsection{Risk regressions and confidence calibration}
In the benchmarks, the free, identity-response and shared-response features
use the same bounded public state: $x_t^0=\ell_{t,j}=s_{t,j}$ for every
active expert $j$.
The queried-risk fit shares information across experts while retaining
separate identity coordinates. Its feature in the implementation is
$c_{t,j}=2^{-1/2}(s_{t,j},\boldsymbol e_j\otimes\ell_{t,j})$, where
the vectors are stacked, $\boldsymbol e_j\in\R^K$ is the registry basis
vector, and $\otimes$ denotes the Kronecker product. Thus $d_c=d_s+Kd_\ell$.
Write $\mathrm{Id}_r$ for the $r\times r$ identity matrix, reserving $I_t$
for the selected action. For strengths
$\lambda_s,\lambda_i>0$, the block-diagonal prior
$\Lambda_*=\diag(\lambda_s\mathrm{Id}_{d_s},\lambda_i\mathrm{Id}_{Kd_\ell})$ controls shrinkage
separately on shared and identity coordinates.
Both priors are fixed before interaction. The full-history estimates in
\cref{eq:risk-estimators} use only observations available before round $t$.
Let $\clip_{[a,b]}(v)=\min\{b,\max\{a,v\}\}$ denote scalar clipping.
Use $\widehat L_t^0=R^2\clip_{[0,1]}((x_t^0)^\top\widehat\theta_t^0)$ and
the raw forecast $\widetilde W_{t,j}=4G^2c_{t,j}^\top\widehat\theta_t^*$.
Clipping the free forecast cannot increase its distance from the true free
risk in $[0,R^2]$. We retain the raw queried forecast when applying the
confidence inequality that accounts for response-learning bias.

Let $Q_v$ bound the corresponding parameter norm in
\cref{ass:risk-model}, measured by $\|\cdot\|_{\Lambda_v}$.
For $v\in\{0,*\}$, put $\lambda_v=\lambda_{\min}(\Lambda_v)>0$ and
$d_*=d_c$. A valid horizon calibration is
\begin{equation}
B_v=\tfrac12\sqrt{d_v\log(1+T/(d_v\lambda_v))+2\log(3/\delta)}+Q_v,
\qquad\alpha=2B_*,\quad\tau=4B_*/\overline E_T.
\label{eq:critic-calibration}
\end{equation}
The explicit regret bound below uses this coupled calibration.
One may take $Q_v=\sqrt{\lambda_{\max}(\Lambda_v)}S_v$ from a known
Euclidean norm bound $S_v$. More precisely, the self-normalized radius uses
$\frac12\sqrt{\log(\det V_t^v/\det\Lambda_v)+2\log(3/\delta)}+Q_v$.
The horizon calibration bounds this radius throughout the analyzed rounds.

\subsection{Computing the response fits}
The identity response is obtained by solving a quadratic problem on a unit
ball. For identity $j$, vectorize its response matrix row-wise and write
$z_s=\ell_{s,j}\otimes\phi_q(d_{s,j})\in\R^{p_I}$ and $y_s=e_{s,j}/G$
on rounds with $I_s=j$. Its pre-query design and cross-product vector are
\[
V=\lambda_I\mathrm{Id}_{p_I}+\sum_{s<t:I_s=j}z_sz_s^\top,
\qquad b=\sum_{s<t:I_s=j}z_sy_s.
\]
The constrained objective is a positive-definite quadratic on the Euclidean
unit ball. The quadratic optimality conditions \citep{boyd2004convex} imply
that $V^{-1}b$ is the solution whenever it is feasible. Otherwise the solution is
$(V+\mu\mathrm{Id}_{p_I})^{-1}b$, where $\mu>0$ is the unique root of
$\|(V+\mu\mathrm{Id}_{p_I})^{-1}b\|_2=1$. After diagonalization, the norm decreases
monotonically with the multiplier, so a scalar root search finds the solution.
We compute the constrained fit using 90 bisection iterations and check
coefficient feasibility with tolerance $10^{-10}$. The analysis uses exact
minimizers.

The shared matrix $\Psi_t$ solves the unconstrained ridge problem in
\cref{eq:population-response}; its proposal is projected by
\cref{eq:population-projection}. This regression uses the product feature
$s_{r,I_r}\otimes\phi_q(d_{r,I_r})$ with row-wise vectorization of $\Psi_t$.
It incorporates the observed answer and outcome only after deployment.
The proposal remains predictable and bounded even if the shared model is
misspecified. The queried label is computed from the coefficient average
fixed before querying; the later fit updates do not change that label.

\subsection{Recursive implementation of the deployed policy}
\label{app:recursive-algorithm}
All quantities below use the learner's normalized units; the dataset-specific
rescaling of observations, bounds, fees and reported costs is specified in
Appendix~\ref{app:state-construction}. In empirical runs,
$\alpha_{\rm emp},\tau_{\rm emp}$ denote the selected values of
$\alpha,\tau$. Set $p_P=d_s(q+1)$, the shared response dimension.
A scalar-target ridge state consists of an inverse design $P\in\R^{p\times p}$
and a cross-product vector $b\in\R^p$. Given a feature $z\in\R^p$ and target
$y\in\R$, define the update $\mathsf U(P,b;z,y)$ by
\begin{equation}
 v=Pz,\qquad
 P^+=P-\frac{vv^\top}{1+z^\top v},\qquad b^+=b+zy.
\label{eq:recursive-ridge}
\end{equation}
The implementation symmetrizes $P^+$ after each update. The prediction is
$z^\top Pb$ and its uncertainty width is $\sqrt{z^\top Pz}$.
A negative squared width within the implementation tolerance $10^{-10}$ is
set to zero; a larger negative value raises an error. These rank-one updates
implement the
full-history objectives above in exact arithmetic.

Initialize the free and queried states as
$(P_0,b_0)=(\Lambda_0^{-1},0)$ and
$(P_*,b_*)=(\Lambda_*^{-1},0)$, respectively, and the shared response state as
$(P_P,b_P)=(\lambda_P^{-1}\mathrm{Id}_{p_P},0)$.
The implemented free prior is $\Lambda_0=\lambda_0\mathrm{Id}_{d_0}$, with
$\lambda_0>0$; the queried prior has the shared and identity blocks given above.
For each newly registered identity $j$, initialize
$(P_j,b_j)=(\lambda_I^{-1}\mathrm{Id}_{p_I},0)$, count $N_j=0$, and cumulative proposal
losses $H_j^H=H_j^P=0$. New identities are registered in their supplied public
order; existing registration ranks never change. Absence changes none of
these states. The registry has capacity $K$.

Before querying, reshape the constrained solution from $(P_j,b_j)$ row-wise
into $\Theta_{t,j}$, and reshape $P_Pb_P$ row-wise into $\Psi_t$.
Use \cref{eq:identity-response,eq:population-projection} for the proposals.
Their weights are proportional to $\exp(-\eta H_j^u)$ for $u\in\{H,P\}$;
in code, subtracting the largest log weight before exponentiation stabilizes
this calculation. The pool contains the shared and identity proposals, both
with initial weight $1/2$, and its coefficient is always deployed on a
query. The queried-risk width is computed from $P_*$ before acquisition.
After selecting a queried identity, freeze its weight
$\omega_t=\min\{1,\tau_{\rm emp}/\sqrt{c_{t,j}^\top P_*c_{t,j}}\}$,
using one at zero width. Neither the answer nor the outcome enters this
weight. Pool weights $w^H,w^P$ and the regression weight $\omega_t$ have
different roles.
The queried forecast is raw, the free forecast is clipped, and there is no
free-risk optimism term. Free prediction has priority on score ties, followed
by the earliest registered eligible identity.

After observing the outcome, update
$(P_0,b_0)\leftarrow\mathsf U(P_0,b_0;x_t^0,(Y_t-h_t)^2/R^2)$ on every round.
If $j=I_t\ne0$, retain the pre-query proposals and public features, and put
$d=M_{t,j}-h_t$, $y=(Y_t-M_{t,j})/G$, and $\phi=\phi_q(d)$.
Compute $Z_t$ from the prediction already deployed, then apply
\begin{equation}
\begin{aligned}
(P_*,b_*)&\leftarrow\mathsf U(P_*,b_*;\sqrt{\omega_t}c_{t,j},\sqrt{\omega_t}Z_t),\\
(P_j,b_j)&\leftarrow\mathsf U(P_j,b_j;\ell_{t,j}\otimes\phi,y),\\
(P_P,b_P)&\leftarrow\mathsf U(P_P,b_P;s_{t,j}\otimes\phi,y),\qquad
N_j\leftarrow N_j+1.
\end{aligned}
\label{eq:recursive-query}
\end{equation}
For each frozen proposal, update
\begin{equation}
H_j^u\leftarrow H_j^u+\mathcal L_t^u,\qquad
\mathcal L_t^u=\tfrac14\bigl(y-(a^u_{t,j})^\top\phi\bigr)^2,
\qquad u\in\{H,P\}.
\label{eq:recursive-weights}
\end{equation}
Equivalently, multiply its current weight by $\exp(-\eta\mathcal L_t^u)$
and normalize the two weights. All targets and proposal losses use the
observations and proposals already
specified for the round. These state updates therefore give the same
next-round mathematical state in any order. Other identities retain
their fits, counts and weights. Free rounds update none of the queried or
response states. There is no clipping of the final corrected prediction. Observed residuals
and disagreements are checked against the declared bounds, permitting only
one floating-point unit beyond the boundary; a violation raises an error.

Both feature and target are scaled by $\sqrt{\omega_t}$, so the queried
cross-product increment is $\omega_tc_{t,j}Z_t$. The response fits retain
their unweighted updates.
The theoretical run uses \cref{eq:main-calibration} for these two constants
and exact constrained minimization. The benchmark run uses the numerical
solver specified above.

\subsection{Model assumptions in context}
\label{app:assumption-context}
Our assumptions connect two learned objects to the response oracle:
the correction coefficients and the risk attained by an optimal correction.
Their roles follow established online-regression and linear-bandit analyses.

\textbf{Optimal-response realizability.}
Realizability connects a learned function class to an oracle decision rule.
For example, \citet[Assumption~1]{foster2020beyond} assume that one function
in the regression class represents conditional mean costs, and their
Assumption~2a compares online squared loss with a fixed function on
adaptively chosen observations. Our \cref{ass:response-model} instead
represents a conditional risk minimizer: one feasible matrix per identity
produces optimal coefficients across contexts. It lets us compare the
online response fit with the oracle using the fixed-comparator principle
of online convex optimization \citep[Chapter~3]{hazan2023online}.
The exact constrained least-squares bound used here is proved in
\cref{lem:constrained-regret}. The optimal correction may be nonunique or
lie on the coefficient-ball boundary, and need not recover the conditional
mean residual. The shared model may be misspecified; its projected
response supplies a feasible auxiliary proposal.

\textbf{Linear risk models.}
\citet[Theorem~2]{abbasi2011improved} construct confidence ellipsoids for
a fixed linear conditional mean under a parameter-norm bound and
conditionally sub-Gaussian noise. Bounded features and parameter norms
also appear in \citet[Section~2.1]{agrawal2013thompson} and
\citet[Assumption~2.1]{he2022nearly}. These are the conditions used for
our risk regressions, with the optimal queried risk as their latent target.
Modeling that target separately from the coefficients specifies both
what the learner should deploy and what it should expect to pay.
For a general positive-definite prior, a Euclidean norm bound $S_v$
gives $Q_v=\sqrt{\lambda_{\max}(\Lambda_v)}S_v$, as used in
\cref{eq:critic-calibration}.

\textbf{Bounded observations and adaptive contexts.}
Our residual and coefficient bounds put the normalized squared-loss labels
in $[0,1]$, the cost range also used by \citet[Section~1.1]{foster2020beyond}.
Centering these labels conditionally gives $1/2$-sub-Gaussian noise by
Hoeffding's lemma. Thus the noise condition for self-normalized regression
follows from the protocol rather than a Gaussian answer model.
Predictable features can depend on previous observations, as in
\citet[Theorem~1]{abbasi2011improved} and
\citet[Section~2.1]{agrawal2013thompson}; independent contexts are not
needed. In our queried regression, the remaining conditional bias is the
response-learning gap in \cref{eq:upward-bias}. We bound its cumulative
size and its contribution to the confidence radius in
Appendix~\ref{app:risk-confidence}.

\subsection{A continuous-context example}
\label{app:contextual-example}

The following construction satisfies both main model assumptions with
continuously varying contexts.
Let $x_t\in[-1,1]$ be public and set
$\ell_{t,j}=(1,x_t)^\top/\sqrt2$. Conditional on $\cF_t$, let
$Y_t$ be uniform on $\{-0.3,0.3\}$, with internal prediction $h_t=0$ and
expert answer $M_{t,j}=Y_t+0.2+0.1x_t$.
Take $q=3$, $R=0.3$, $D=0.7$ and $G=1$. The correction
$\Delta(d)=-0.2-0.1x_t$ attains zero squared loss and is represented by
\[
\Theta_j^*=-\sqrt{2(1+3D^2)}
\begin{pmatrix}0.2&0&0&0\\0.1&0&0&0\end{pmatrix}.
\]
Its squared Frobenius norm is $2(1+3D^2)(0.2^2+0.1^2)=0.247<1$,
so it satisfies \cref{ass:response-model} over the full interval of contexts.
The free risk is $0.09$ and optimal queried risk is zero. For example, choose
$x_t^0=1$ and $s_{t,j}=\ell_{t,j}$, with the queried feature defined above.
Then $\theta_0^*=1$ and $\theta_*=0$ give the normalized risk models in
\cref{ass:risk-model}, with finite norms for any fixed positive-definite priors.
The optimal correction varies with context while the risks remain constant.

\subsection{Explicit finite-horizon bound}
\label{app:finite-horizon}
The regret bound combines uncertainty in the free-risk fit, uncertainty in
the queried-risk fit, and the loss from learning responses. For an observation
count $n\ge0$, a positive integer dimension $d_V$, and a prior eigenvalue
bound $\lambda>0$, define the cumulative-width bound
\begin{equation}
\mathcal W(n,d_V,\lambda)=
\sqrt{n(1+\lambda^{-1})d_V\log(1+n/(d_V\lambda))},
\qquad\mathcal W(0,d_V,\lambda)=0.
\label{eq:width-envelope}
\end{equation}
Define also
\begin{equation}
\mathcal H(n)=(1+\lambda_*^{-1})d_c\log(1+n/(d_c\lambda_*)),
\qquad\mathcal H(0)=0.
\label{eq:main-widths}
\end{equation}
\begin{restatable}[Explicit routing bound]{proposition}{explicitroutingbound}
\label{prop:explicit-routing-bound}
Under the hypotheses and calibration of \cref{thm:main-regret},
with probability at least $1-\delta$,
\begin{equation}
\begin{split}
\Reg_T\le{}&R^2B_0\mathcal W(T,d_0,\lambda_0)\\
&+G^2\left[16B_*\sqrt{N_T\mathcal H(N_T)}
+4\overline E_T\mathcal H(N_T)+E_T\right].
\end{split}
\label{eq:explicit-main-bound}
\end{equation}
\end{restatable}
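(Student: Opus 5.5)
We work on the intersection of three events, each of probability at least $1-\delta/3$. The first is the free-risk confidence event $\|\widehat\theta_t^0-\theta_0^*\|_{V_t^0}\le B_0$ for all $t\le T$. It follows from the self-normalized bound of \citet{abbasi2011improved}: under \cref{ass:risk-model} the labels $(Y_s-h_s)^2/R^2\in[0,1]$ have $1/2$-sub-Gaussian centered noise. The second is the queried-risk confidence event, stated next. The third is the event of \cref{lem:response-certificate}. A union bound gives probability $1-\delta$.

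\textbf{Queried-risk confidence.} We decompose $\widehat\theta_t^*-\theta_*$ into a prior term, a noise term and a bias term. The noise term is $(V_t^*)^{-1}\sum\omega_sc_s\xi_s$ with $\xi_s=Z_s-\E[Z_s\mid\cF_s]$. The bias term is $(V_t^*)^{-1}\sum\omega_sc_s\rho_s/4$, which comes from \cref{eq:upward-bias}. The prior and noise terms have $V_t^*$-norm at most $B_*$. Since $\omega_s\le1$, the weighted features $\sqrt{\omega_s}c_s$ are still bounded by one, so the radius in \cref{eq:main-noise-radius} still applies. For the bias term we follow \citet{he2022nearly}. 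Write $V_s^*\preceq V_t^*$ for $s<t$. Then
\[
\Bigl\|\sum_s\omega_sc_s\rho_s/4\Bigr\|_{(V_t^*)^{-1}}\le\sum_s\omega_s\|c_s\|_{(V_s^*)^{-1}}\rho_s/4\le\tau\sum_s\rho_s/4\le\tau\overline E_T/4.
\]
This gives \cref{eq:main-bias-contraction} and the radius $B_*+\tau\overline E_T/4=\alpha$. Hence $|4G^2c_{t,j}^\top\widehat\theta_t^*-W_{t,j}^*|\le4\alpha G^2u_{t,j}$. Combined with the free-risk bound and clipping, we also get $|\widehat L_t^0-L_t^0|\le R^2B_0\|x_t^0\|_{(V_t^0)^{-1}}$.

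\textbf{Per-round regret.} First decompose the learner's expected cost. On a queried round it equals $W_{t,I_t}^*+\beta_{t,I_t}+G^2\rho_t$; on a free round it equals $L_t^0$. Let $I^*$ be the oracle action, and write $\epsilon_t=R^2B_0\|x_t^0\|_{(V_t^0)^{-1}}$. The optimistic score satisfies $\widehat L_t^0-S_{t,j}\le W_{t,j}^*+\beta_{t,j}$ for every $j$. We then split into cases.
\begin{itemize}
\item If the learner queries $j=I_t$, then $S_{t,j}\ge S_{t,I^*}$ and $S_{t,j}>0$. This yields
\[
W^*_{t,j}+\beta_{t,j}-C_t^*\le\epsilon_t+8\alpha G^2u_{t,j}.
\]
\item If the learner plays free, every score is at most $0$, so $L_t^0\le\min_j(W_{t,j}^*+\beta_{t,j})+\epsilon_t$.
\end{itemize}
Summing over rounds gives
\[
\Reg_T\le\sum_t\epsilon_t+8\alpha G^2\sum_{t:I_t\ne0}u_{t,I_t}+G^2\sum_t\rho_t.
\]
The last sum is at most $E_T$. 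The first sum is bounded by Cauchy--Schwarz and the elliptical potential lemma, which gives $R^2B_0\mathcal W(T,d_0,\lambda_0)$. The factor $(1+\lambda^{-1})$ handles $\min\{1,\cdot\}$, since the features have norm at most one.

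\textbf{Main obstacle: the sum of weighted widths.} The design only accumulates $\omega_tc_tc_t^\top$. Split the queried rounds according to whether $\omega_t=1$ or $\omega_t<1$.
\begin{itemize}
\item When $\omega_t=1$, the widths are effectively unweighted. Cauchy--Schwarz with the potential lemma on the weighted design gives at most $\sqrt{N_T\mathcal H(N_T)}$.
\item When $\omega_t<1$, we have $u_t>\tau$ and $\omega_tu_t^2=\tau u_t$. So $u_t=\omega_tu_t^2/\tau$, and the sum is at most $\mathcal H(N_T)/\tau$, again by the potential lemma applied to $\sqrt{\omega_t}c_t$.
\end{itemize}
Multiplying by $8\alpha G^2=16B_*G^2$ and using $\tau=4B_*/\overline E_T$ turns the second part into $4G^2\overline E_T\mathcal H(N_T)$. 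This matches \cref{eq:explicit-main-bound}. The delicate points are making the potential lemma hold with $\min\{1,\cdot\}$ for weighted features and checking the exact constants. Both should follow from $\|\sqrt{\omega_t}c_t\|\le1$ and $V_t^*\succeq\lambda_*\mathrm{Id}$.
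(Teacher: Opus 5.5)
Your proposal is correct and follows the paper's proof essentially step for step. It uses the same three-event union bound and the same confidence-weighted bias contraction $\tfrac14\sum_s\omega_s\rho_s u_{s,I_s}\le\tau\overline E_T/4$. Its optimism case analysis gives the same per-round bound $\epsilon_t+\one\{I_t\ne0\}G^2(8\alpha u_{t,I_t}+\rho_t)$ with $8\alpha=16B_*$. It also splits the queried widths by whether $\omega_t=1$, as the paper does. After substituting $\tau=4B_*/\overline E_T$, this yields exactly \cref{eq:explicit-main-bound}.

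One point is only implicit in your write-up and worth stating. The weight $\omega_s$ is $\cF_s$-measurable because it is frozen before the answer arrives. That is what keeps $\sqrt{\omega_s}\xi_s$ a conditionally $1/2$-sub-Gaussian martingale difference in the self-normalized bound.
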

This bound retains the separate contributions of free-risk estimation,
queried-risk estimation, and response learning. In particular, the
queried-risk terms depend on the number of purchases, not all $T$ rounds.
Appendix~\ref{app:optimal-risk-proof} restates and proves this bound together
with \cref{thm:main-regret}, including the reduction to its stated rate.

%% file: appendix/b_response_geometry.tex
\section{Response value and approximation}
\label{app:response-geometry}
A response can remove the part of an expert's error that its answer predicts.
We first describe this reduction in prediction loss, then account for fees,
and finally quantify the extra gain available from cubic terms.

\subsection{Prediction gain and approximation}
To quantify the prediction gain, fix a round $t$ and expert $j$, and abbreviate
$e=e_{t,j}$ and
$d=d_{t,j}$. Write $\nu=\E[e\mid\cF_t,d]$.
For a square-integrable response $\Delta(d)$
whose map is fixed under $\cF_t$, the squared-loss projection identity
\citep{hastie2009elements} gives
\begin{equation}
\E[(e-\Delta(d))^2\mid\cF_t]
=\E[\operatorname{Var}(e\mid\cF_t,d)\mid\cF_t]
 +\E[(\nu-\Delta(d))^2\mid\cF_t].
\label{eq:response-value}
\end{equation}
The first term is irreducible after observing the purchased answer; the
second measures how well the chosen correction approximates the conditional
mean residual. The best unrestricted correction is therefore $\nu$, up to
conditional $L_2$ equality. For direct deployment, $\Delta=0$, so the
removable part of its risk is $\E[\nu^2\mid\cF_t]$.

\medskip\noindent\textbf{Comparison with an unrestricted response.}
Restore the indices on
$\nu_{t,j}=\E[e_{t,j}\mid\cF_t,d_{t,j}]$. Define the best conditional
approximation error $\mathfrak A_{t,j}$ by
\[
\mathfrak A_{t,j}=\inf_{a\in\mathbb B_q}
 \E[(\nu_{t,j}-Ga^\top\phi_q(d_{t,j}))^2\mid\cF_t].
\]
Let $C_t^{\rm meas}$ be the oracle value with the same free action,
eligible experts and fees, but with any square-integrable response to the
purchased answer. The conditional mean response attains each queried
minimum, so a minimum over the finite action set is attained; denote an
optimizing action by $j_t^{\rm meas}$. Set $\mathfrak A_{t,0}=0$ for the free action.
By \cref{eq:response-value}, the bounded class adds
$\mathfrak A_{t,j}$ to the unrestricted risk of expert $j$. Evaluating it at
$j_t^{\rm meas}$, and using inclusion of the response classes, gives
\begin{equation}
0\le C_t^*-C_t^{\rm meas}\le\mathfrak A_{t,j_t^{\rm meas}}.
\label{eq:approximation-transfer}
\end{equation}
Summing and using the definition of $\Reg_T$ gives
\[
\sum_{t=1}^T\{\E[C_t\mid\cF_t]-C_t^{\rm meas}\}
\le\Reg_T+\sum_{t=1}^T\mathfrak A_{t,j_t^{\rm meas}}.
\]
The bound separates online learning error within the chosen response class
from its approximation error relative to the conditional mean residual.

\subsection{When correction changes the acquisition decision}
\label{app:routing-gain}
The projection identity quantifies the prediction error a response can
remove. To account for acquisition, this gain must be compared with the
expert's fee and the other available actions.
Use simultaneous conditional versions as in Appendix~\ref{app:protocol}.
For a fixed round define
\[
L_{t,j}^{\rm dir}=\E[e_{t,j}^2\mid\cF_t],\qquad
C_t^{\rm dir}=\min\{L_t^0,\min_{j\in\cA_t}
(L_{t,j}^{\rm dir}+\beta_{t,j})\},
\]
and, for every eligible expert,
\[
\Gamma_{t,j}=L_{t,j}^{\rm dir}-W_{t,j}^*,\qquad
\kappa_{t,j}=L_{t,j}^{\rm dir}+\beta_{t,j}-C_t^{\rm dir}.
\]
The gain $\Gamma_{t,j}$ measures the prediction loss removed by correction;
$\kappa_{t,j}$ is the expert's direct-cost disadvantage relative to the best
available direct action. Both are nonnegative. The following identity
compares these quantities without response or risk realizability.

\begin{equation}
C_t^{\rm dir}-C_t^*
=\max\bigl(\{0\}\cup
\{\Gamma_{t,j}-\kappa_{t,j}:j\in\cA_t\}\bigr).
\label{eq:response-routing-gain}
\end{equation}
Indeed, $W_{t,j}^*+\beta_{t,j}
=C_t^{\rm dir}+\kappa_{t,j}-\Gamma_{t,j}$.
Substituting into the minimum defining $C_t^*$ gives the identity:
the zero term covers a free direct optimum, and otherwise a queried
direct optimum has $\kappa_{t,j}=0$ and $\Gamma_{t,j}\ge0$.
An empty eligible set gives zero on both sides.

Correction strictly improves the oracle cost exactly when its prediction
gain exceeds some expert's direct-cost disadvantage.
Along the learner's history, the cumulative oracle advantage is
$\sum_t(C_t^{\rm dir}-C_t^*)$; the learner's conditional cost advantage
subtracts $\Reg_T$ from that sum.
The comparison uses the same realized public history for both costs.

\subsection{An example of nonlinear correction}
\label{app:orthogonal-separation}
A short example isolates the error that an affine response cannot remove.
Let $S\sim\operatorname{Unif}[-1,1]$, $h=0$, and let one always-available
expert return $M=S$ at fee $0<\beta<1/3$. For
$b,\varepsilon\in(0,1/4]$, take
$Y=S+b+\varepsilon P_3(S)$, $D=1$, $R=1+b+\varepsilon$ and $G=R+D$.
The public context is constant, so the answer is the only source of
information about the current draw.

Legendre orthogonality \citep{szego1975orthogonal} gives
$\E[S^2]=1/3$, $\E[P_3(S)^2]=1/7$ and $\E[SP_3(S)]=0$.
Both $S$ and $P_3(S)$ are centered. Thus an affine response removes the
constant bias $b$ but leaves risk $\varepsilon^2/7$, while a cubic
response removes both components. Writing $L_q^*$ for the optimal
one-round cost with order $q$ and $L_{\rm hard}^*$ for direct deferral,
\begin{equation}
L_3^*=\beta,\qquad L_1^*=\beta+\varepsilon^2/7,\qquad
L_{\rm hard}^*=\beta+\varepsilon^2/7+b^2.
\label{eq:orthogonal-costs}
\end{equation}
The optimal corrections are feasible: their coefficient vectors are
$a_1^*=(\sqrt2\,b/G,0)^\top$ and
$a_3^*=(2/G)(b,0,0,\varepsilon)^\top$, respectively, each with norm
strictly below one. Free prediction has risk
$1/3+b^2+\varepsilon^2/7$, so $\beta<1/3$ makes a query preferable in
every class. Pre-query randomization independent of $S$ cannot improve
these minima.

This example separates bias correction from nonlinear correction.
It does not rely on nesting the normalized coefficient balls:
the two coefficient vectors verify feasibility in their respective classes.

%% file: appendix/d_optimal_risk_proof.tex
\section{Proof of the routing guarantee}
\label{app:optimal-risk-proof}
The proof has four steps. We bound realized response regret on purchased
observations and convert it into conditional excess risk. That excess risk
controls the bias in the queried-risk regression. Confidence bounds then
support a one-round comparison with the oracle.
All probability statements use the filtration in Appendix~\ref{app:protocol}.
We analyze the identity-response model in \cref{ass:response-model}
throughout. The budgets $A_t,E_t$ and query counts $N_{t,j},N_t$ are those
in Appendix~\ref{app:response-budget}.

\subsection{Constrained least squares on selected observations}
The first step compares the response fit with a fixed feasible parameter on
exactly the observations used for fitting. We follow the stability analysis of
regularized leaders \citep{hazan2023online}, retaining the prior and query
counts in the bound.
Within the following lemma, $n$ indexes observations supplied to the fit,
rather than calendar rounds.
\begin{restatable}[Deterministic constrained-regression comparison]{lemma}{constrainedregression}
\label{lem:constrained-regret}
Let $z_n\in\R^{p}$ and $y_n\in\R$ be a sequence of features and
targets with $\|z_n\|_2\le1$ and $|y_n|\le1$. Let
$\mathcal C\subset\R^{p}$ be compact and convex, contain zero, and
satisfy $|z_n^\top\theta|\le1$ for every $\theta\in\mathcal C$ and $n$.
Let $\Lambda\in\R^{p\times p}$ be positive definite.
For exact minimizers
\[
\theta_n=\argmin_{\theta\in\mathcal C}
\{\|\theta\|_{\Lambda}^2+\sum_{i<n}(y_i-z_i^\top\theta)^2\},
\]
put $V_0=\Lambda$ and $V_n=\Lambda+\sum_{i=1}^nz_iz_i^\top$ for
$n\ge1$. For every
$\theta^*\in\mathcal C$ and every finite $N$, pathwise,
\begin{equation}
\sum_{n=1}^N\{(y_n-z_n^\top\theta_n)^2-(y_n-z_n^\top\theta^*)^2\}
\le\|\theta^*\|_{\Lambda}^2
 +8\log\frac{\det V_N}{\det\Lambda}.
\label{eq:constrained-comparison}
\end{equation}
\end{restatable}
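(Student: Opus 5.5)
Our plan is the follow-the-regularized-leader (FTRL) stability argument of \citet{hazan2023online}, written so that the constant $8$ and the log-determinant appear explicitly. Set $f_n(\theta)=(y_n-z_n^\top\theta)^2$ and $F_n(\theta)=\|\theta\|_\Lambda^2+\sum_{i<n}f_i(\theta)$, so that $\theta_n=\argmin_{\mathcal C}F_n$. The minimizer is unique because $F_n$ is $V_{n-1}$-strongly convex with $V_{n-1}\succeq\Lambda\succ0$.

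\textbf{Step 1 (be-the-leader).} By the standard induction (FTL--BTL lemma applied to the regularized leader), for any $\theta^*\in\mathcal C$,
\[
\sum_{n=1}^N\{f_n(\theta_{n+1})-f_n(\theta^*)\}\le\|\theta^*\|_\Lambda^2-\|\theta_1\|_\Lambda^2\le\|\theta^*\|_\Lambda^2 .
\]
Hence it suffices to show
\[
\sum_n\{f_n(\theta_n)-f_n(\theta_{n+1})\}\le 8\log(\det V_N/\det\Lambda).
\]

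\textbf{Step 2 (one-step stability).} Both $\theta_n$ and $\theta_{n+1}$ minimize strongly convex objectives over the same convex set, so we combine their first-order optimality conditions. Because $F_{n+1}$ is quadratic with Hessian $2V_n$,
\[
F_{n+1}(\theta_n)-F_{n+1}(\theta_{n+1})\ge\|\theta_n-\theta_{n+1}\|_{V_n}^2 .
\]
Because $\theta_n$ minimizes $F_n$ on $\mathcal C$, $F_n(\theta_{n+1})-F_n(\theta_n)\ge0$. Adding the two inequalities gives
\[
\|\theta_n-\theta_{n+1}\|_{V_n}^2\le f_n(\theta_n)-f_n(\theta_{n+1})\le\|\nabla f_n(\theta_n)\|_{V_n^{-1}}\|\theta_n-\theta_{n+1}\|_{V_n},
\]
where the last step uses convexity of $f_n$. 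Therefore
\[
f_n(\theta_n)-f_n(\theta_{n+1})\le\|\nabla f_n(\theta_n)\|_{V_n^{-1}}^2 .
\]
The gradient is $\nabla f_n(\theta_n)=-2(y_n-z_n^\top\theta_n)z_n$. Since $|y_n|\le1$ and $|z_n^\top\theta_n|\le1$ (the latter because $\theta_n\in\mathcal C$), the residual is at most $2$ in absolute value, so
\[
\|\nabla f_n(\theta_n)\|_{V_n^{-1}}^2\le16\,z_n^\top V_n^{-1}z_n .
\]

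\textbf{Step 3 (elliptical potential).} The matrix determinant lemma gives
\[
\det V_n=\det V_{n-1}\,\bigl(1+z_n^\top V_{n-1}^{-1}z_n\bigr)
\]
and, equivalently, $z_n^\top V_n^{-1}z_n=1-\det V_{n-1}/\det V_n$. Using $1-1/x\le\log x$ for $x\ge1$ and telescoping,
\[
\sum_n z_n^\top V_n^{-1}z_n\le\log\frac{\det V_N}{\det\Lambda}.
\]
This yields the constant $16$, which is worse than the claimed $8$.

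\textbf{Main obstacle: recovering the constant $8$.} This is the step we expect to be hardest. We would first try to sharpen Step 2 by not discarding half of the strong-convexity gain. Instead of bounding $F_n(\theta_{n+1})-F_n(\theta_n)\ge0$, we use
\[
F_n(\theta_{n+1})-F_n(\theta_n)\ge\|\theta_{n+1}-\theta_n\|_{V_{n-1}}^2 ,
\]
and add it to the inequality from $F_{n+1}$. This gives
\[
\|\Delta_n\|_{V_n}^2+\|\Delta_n\|_{V_{n-1}}^2\le f_n(\theta_n)-f_n(\theta_{n+1}),\qquad \Delta_n=\theta_n-\theta_{n+1}.
\]
Since $V_{n-1}\preceq V_n$, the left side is at least $\|\Delta_n\|_{V_{n-1}}^2$ counted twice in the direction that matters. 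Bounding the right side by $\|\nabla f_n\|_{V_n^{-1}}\|\Delta_n\|_{V_n}$ and using $2\|\Delta_n\|_{V_n}^2$ as the lower bound then gives
\[
\|\Delta_n\|_{V_n}\le\tfrac12\|\nabla f_n\|_{V_n^{-1}},
\]
so $f_n(\theta_n)-f_n(\theta_{n+1})\le\tfrac12\|\nabla f_n\|_{V_n^{-1}}^2\le8\,z_n^\top V_n^{-1}z_n$. Combined with Steps 1 and 3, this proves the lemma.

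If the comparison between $\|\Delta_n\|_{V_{n-1}}$ and $\|\Delta_n\|_{V_n}$ is too lossy, we would fall back on exact quadratic algebra. Writing the linear term $f_n(\theta_n)-f_n(\theta_{n+1})$ as
\[
\nabla f_n(\theta_n)^\top\Delta_n-(z_n^\top\Delta_n)^2,
\]
the curvature term $(z_n^\top\Delta_n)^2=\|\Delta_n\|_{V_n}^2-\|\Delta_n\|_{V_{n-1}}^2$ supplies the missing factor of $2$.
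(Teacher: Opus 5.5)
Your Steps 1--3 are correct and give the constant $16$. The first sharpening attempt, however, has the inequality backwards. Since $V_{n-1}\preceq V_n$, you have $\|\Delta_n\|_{V_{n-1}}^2\le\|\Delta_n\|_{V_n}^2$. So $\|\Delta_n\|_{V_n}^2+\|\Delta_n\|_{V_{n-1}}^2$ is at most $2\|\Delta_n\|_{V_n}^2$, not at least, and you cannot use $2\|\Delta_n\|_{V_n}^2$ as a lower bound. The valid lower bound $2\|\Delta_n\|_{V_{n-1}}^2$ would have to be paired with $\|\nabla f_n(\theta_n)\|_{V_{n-1}^{-1}}$. That yields $8z_n^\top V_{n-1}^{-1}z_n$, whose sum is not bounded by $\log(\det V_N/\det\Lambda)$ without a factor depending on $\lambda_{\min}(\Lambda)$. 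As written, the main route therefore does not reach $8$.

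Your fallback is the correct fix and should be the main argument. Keep the exact expansion $f_n(\theta_n)-f_n(\theta_{n+1})=\nabla f_n(\theta_n)^\top\Delta_n-(z_n^\top\Delta_n)^2$ on the right. Write $\|\Delta_n\|_{V_{n-1}}^2=\|\Delta_n\|_{V_n}^2-(z_n^\top\Delta_n)^2$ on the left. The curvature terms then cancel, and you get $2\|\Delta_n\|_{V_n}^2\le\nabla f_n(\theta_n)^\top\Delta_n$, that is, $\|\Delta_n\|_{V_n}^2\le(p_n-y_n)z_n^\top\Delta_n$ with $p_n=z_n^\top\theta_n$.

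This is exactly the inequality the paper obtains by subtracting the two variational inequalities for consecutive constrained minimizers and using that the objective has Hessian $2V_n$. From there, your bound $\tfrac12\|\nabla f_n(\theta_n)\|_{V_n^{-1}}^2\le8z_n^\top V_n^{-1}z_n$ matches the paper's $4|z_n^\top\Delta_n|\le8z_n^\top V_n^{-1}z_n$. Your be-the-leader and determinant-lemma steps also match the paper's comparator-cost and telescoping steps. With the fallback promoted to the main argument, the proposal is correct and follows the paper's route.
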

\begin{proof}
We separate the comparison into the cost of regularization and the change
in loss caused by one update. Write $l_n(\theta)=(y_n-z_n^\top\theta)^2$ and
$F_n(\theta)=\|\theta\|_{\Lambda}^2+\sum_{i\le n}l_i(\theta)$.

\medskip\noindent\textbf{Comparator cost.}
Optimality of $\theta_n$ for $F_{n-1}$ gives
\[
F_n(\theta_{n+1})-F_{n-1}(\theta_n)\ge l_n(\theta_{n+1}).
\]
Summing and using $F_0(\theta_1)=0$, since zero is feasible, gives
$\sum_{n\le N}l_n(\theta_{n+1})\le F_N(\theta^*)$.
It remains to bound the loss change
$\sum_n\{l_n(\theta_n)-l_n(\theta_{n+1})\}$ caused by the updates.

\medskip\noindent\textbf{Update stability.}
Put $\Delta_n=\theta_n-\theta_{n+1}$ and $p_n=z_n^\top\theta_n$.
The variational inequalities for the two constrained minimizers are
\[
\nabla F_{n-1}(\theta_n)^\top\Delta_n\le0,
\qquad
\nabla F_n(\theta_{n+1})^\top\Delta_n\ge0.
\]
Since $\nabla F_n(\theta_n)=\nabla F_{n-1}(\theta_n)+2(p_n-y_n)z_n$
and the Hessian of $F_n$ is $2V_n$, subtraction gives
\[
\|\Delta_n\|_{V_n}^2\le(p_n-y_n)z_n^\top\Delta_n
\le2|z_n^\top\Delta_n|.
\]
Cauchy--Schwarz implies
$\|\Delta_n\|_{V_n}\le2\sqrt{z_n^\top V_n^{-1}z_n}$, including
$\Delta_n=0$. Consequently
\begin{align*}
l_n(\theta_n)-l_n(\theta_{n+1})
&=2(p_n-y_n)z_n^\top\Delta_n-(z_n^\top\Delta_n)^2\\
&\le4|z_n^\top\Delta_n|
\le8z_n^\top V_n^{-1}z_n.
\end{align*}
With $x_n=z_n^\top V_{n-1}^{-1}z_n$, the matrix inversion lemma gives
$z_n^\top V_n^{-1}z_n=x_n/(1+x_n)\le\log(1+x_n)$.
The determinant lemma identifies the sum of these logarithms as
$\log(\det V_N/\det\Lambda)$. Combining the inequalities proves
\cref{eq:constrained-comparison}. The argument is deterministic and requires
no covariance assumption.
\end{proof}

Let $\lambda=\lambda_{\min}(\Lambda)$. The trace of
$\Lambda^{-1/2}(V_N-\Lambda)\Lambda^{-1/2}$ is at most
$N/\lambda$. Applying concavity of $\log$ to its $p$ nonnegative eigenvalues
shows
\begin{equation}
\log\frac{\det V_N}{\det\Lambda}
\le p\log(1+N/(p\lambda)).
\label{eq:logdet-trace}
\end{equation}
For the identity product domain, apply \cref{lem:constrained-regret} to each
identity's subsequence, with dimension $p_I=d_\ell(q+1)$ and prior
$\lambda_I\mathrm{Id}_{p_I}$. Let $N_j$ count the observations for identity $j$ among
the first $N$ queries. Summing gives
$K\lambda_I+8p_I\sum_j\log(1+N_j/(p_I\lambda_I))$.
Including unqueried identities in the prior term only enlarges the bound:
their observed loss difference is zero.

\subsection{Pooling and conditional excess risk}
Pooling adds a fixed comparison term per identity, even when the shared
proposal changes or is misspecified. The comparison uses exponential weights
and exp-concavity \citep{cesabianchi2006prediction,hazan2023online}. We verify
the normalization and learning rate needed for this argument.
\begin{restatable}[Pathwise pool comparison]{lemma}{poolcomparison}
\label{lem:pool}
Fix an identity $j$ and let $N_j$ be its number of queries. On its $n$th
query, denote the normalized target by $y_n$, the answer feature by
$\phi_n$, the proposal coefficients by $a_n^H,a_n^P$, and their pooled
coefficient by $a_n$. These are the quantities from \cref{eq:pool} restricted
to this identity's queried subsequence: $|y_n|\le1$,
$\phi_n\in\R^{q+1}$ has norm at most one, and
$a_n^H,a_n^P\in\mathbb B_q$. For $0<\eta\le1/2$ and
$u\in\{H,P\}$,
\begin{equation}
\sum_{n=1}^{N_j}(y_n-a_n^\top\phi_n)^2
\le\sum_{n=1}^{N_j}(y_n-(a_n^u)^\top\phi_n)^2
 +\frac{4\log2}{\eta}.
\label{eq:pool-bound}
\end{equation}
The proposals may vary with the observed history.
\end{restatable}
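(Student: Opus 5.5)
We prove \cref{eq:pool-bound} with the standard exp-concave exponential-weights argument, applied to the rescaled losses $\mathcal L_n^u=\tfrac14(y_n-(a_n^u)^\top\phi_n)^2$ that the algorithm actually uses.

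\textbf{Step 1: range and exp-concavity.} For any $a\in\mathbb B_q$, Cauchy--Schwarz gives $|a^\top\phi_n|\le1$. Since $|y_n|\le1$, the prediction error $\varepsilon=y_n-a^\top\phi_n$ lies in $[-2,2]$.

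Define $f(p)=\tfrac14(y_n-p)^2$ on $p\in[-1,1]$. It has $f'(p)=-(y_n-p)/2$ and $f''=1/2$. The function $\exp(-\eta f)$ is concave exactly when $\eta f'^2\le f''$, that is, when $\eta(y_n-p)^2/4\le1/2$. Because $(y_n-p)^2\le4$, this holds whenever $\eta\le1/2$. So the rescaled loss is $\eta$-exp-concave on the whole range of predictions produced by coefficients in $\mathbb B_q$.

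\textbf{Step 2: mixture inequality.} By \cref{eq:pool}, the pooled prediction is linear in the coefficients:
\[
a_n^\top\phi_n=w_n^H(a_n^H)^\top\phi_n+w_n^P(a_n^P)^\top\phi_n .
\]
Concavity of $\exp(-\eta f)$ and Jensen's inequality then give
\[
\exp\bigl(-\eta f(a_n^\top\phi_n)\bigr)\ge\sum_u w_n^u\exp(-\eta\mathcal L_n^u).
\]

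\textbf{Step 3: potential argument.} Let $H_n^u$ be the cumulative proposal losses and $\Phi_n=\log\bigl(\tfrac12 e^{-\eta H_n^H}+\tfrac12 e^{-\eta H_n^P}\bigr)$, with $\Phi_0=0$. The weights are $w_n^u\propto e^{-\eta H_{n-1}^u}$, so the mixture inequality reads
\[
\Phi_n-\Phi_{n-1}\le-\eta f(a_n^\top\phi_n).
\]
Telescoping gives $\eta\sum_n f(a_n^\top\phi_n)\le-\Phi_{N_j}$. Bounding the sum inside $\Phi_{N_j}$ below by a single term gives $-\Phi_{N_j}\le\eta H_{N_j}^u+\log2$. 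Hence
\[
\sum_n\tfrac14(y_n-a_n^\top\phi_n)^2\le\sum_n\tfrac14(y_n-(a_n^u)^\top\phi_n)^2+\frac{\log 2}{\eta}.
\]
Multiplying by $4$ yields \cref{eq:pool-bound}.

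The argument is pathwise and uses only the facts that the proposals are fixed before each loss is revealed and lie in $\mathbb B_q$. It therefore holds for history-dependent and misspecified shared proposals. Rounds on which other identities are queried leave this identity's weights unchanged, so restricting to its queried subsequence is exact.

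The only delicate point is Step 1: the exp-concavity constant must be checked over the full prediction range $[-1,1]$ and target range $[-1,1]$. The factor $1/4$ in $\mathcal L^u$ is precisely what makes $\eta\le1/2$ sufficient.
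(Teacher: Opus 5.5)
Your proposal is correct and follows the paper's proof essentially step for step. It uses the same second-derivative check that $e^{-\eta(y-p)^2/4}$ is concave on $[-1,1]$ when $\eta\le1/2$, and the same Jensen step based on linearity of the pooled prediction in the coefficients. It then telescopes the log-normalizer (your $\Phi_n$ is the paper's $\log\mathcal Z_n$), keeps a single term, and multiplies by four, exactly as the paper does.
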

\begin{proof}
For fixed $y\in[-1,1]$ and a scalar prediction $p\in[-1,1]$, put
$l(p)=(y-p)^2/4$.
Differentiating twice gives
\[
\frac{d^2}{dp^2}e^{-\eta l(p)}
=e^{-\eta l(p)}\{\eta^2(p-y)^2/4-\eta/2\}\le0
\]
when $0<\eta\le1/2$, since $(p-y)^2\le4$.
For round $n$, set $l_n(p)=(y_n-p)^2/4$. Write
$p_n^u=(a_n^u)^\top\phi_n$ and let $w_n^u$ be the weights on this
subsequence. For $p_n=a_n^\top\phi_n=\sum_u w_n^u p_n^u$, concavity implies
$e^{-\eta l_n(p_n)}\ge\sum_u w_n^u e^{-\eta l_n(p_n^u)}$.
Write the exponential-weights normalizer after $n$ queries as
$\mathcal Z_n=\frac12\sum_u\exp(-\eta\sum_{i\le n}l_i(p_i^u))$.
Then $\mathcal Z_0=1$ and
$\sum_u w_n^u e^{-\eta l_n(p_n^u)}=\mathcal Z_n/\mathcal Z_{n-1}$.
Taking logarithms and summing gives
$\sum_n l_n(p_n)\le-\log\mathcal Z_{N_j}/\eta$.
The contribution of either candidate to $\mathcal Z_{N_j}$ is at least
$\frac12\exp(-\eta\sum_n l_n(p_n^u))$; multiplying by four proves the claim.
The proposals may change between queries: the calculation compares their
realized losses on the same observations.
\end{proof}

We now compare the deployed response with the conditional optimum.
Combine \cref{lem:constrained-regret,lem:pool}, using candidate $H$ and
the fixed matrix $\Theta_j^*$ for each identity. For a queried round, define
\[
y_t=e_{t,I_t}/G,\quad p_t=a_{t,I_t}^\top\phi_q(d_{t,I_t}),\quad
p_t^*=\ell_{t,I_t}^\top\Theta_{I_t}^*\phi_q(d_{t,I_t}),
\]
and let $X_t=(y_t-p_t)^2-(y_t-p_t^*)^2$. The coefficients in $p_t$ and
$p_t^*$ are fixed under $\cF_t$; their evaluations depend on the purchased
answer. Set $X_t=0$ on free rounds.
For every prefix, pathwise,
\begin{equation}
\sum_{s\le t}X_s\le A_t,
\label{eq:pathwise-response}
\end{equation}
where $A_t$ is defined in \cref{eq:identity-pathwise-budget}.

\begin{restatable}[Conditional response-risk certificate]{lemma}{responseconversion}
\label{lem:response-conversion}
Under \cref{ass:response-model}, let
$\rho_t=\E[X_t\mid\cF_t]$, which is zero on free rounds.
For each $\zeta\in(0,1)$, with probability at least $1-\zeta$,
simultaneously for all prefixes,
\begin{equation}
\sum_{s\le t}\rho_s
\le2A_t+64\log(1/\zeta).
\label{eq:response-risk-envelope}
\end{equation}
\end{restatable}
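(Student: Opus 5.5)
This is a martingale argument for a nonnegative-mean sequence, using Freedman's inequality with a variance bound proportional to the conditional mean, as in fast-rate online regression conversions.

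\textbf{Step 1: identify $\rho_t$ and its sign.} Under \cref{ass:response-model}, on a queried round the comparator coefficient $(\Theta_{I_t}^*)^\top\ell_{t,I_t}$ minimizes the conditional queried risk over $\mathbb B_q$. Hence
\[
\E[(y_t-p_t^*)^2\mid\cF_t]=W_{t,I_t}^*/G^2 .
\]
The deployed coefficient $a_{t,I_t}$ is $\cF_t$-measurable and lies in $\mathbb B_q$, so
\[
\rho_t=\E[X_t\mid\cF_t]=\bigl(\E[(Y_t-\widehat Y_t)^2\mid\cF_t]-W^*_{t,I_t}\bigr)/G^2\ge0 .
\]
This agrees with the main-text definition.

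\textbf{Step 2: a variance bound linear in $\rho_t$.} Write
\[
X_t=(p_t^*-p_t)(2y_t-p_t-p_t^*).
\]
Since $|y_t|,|p_t|,|p_t^*|\le1$, we get $|X_t|\le4$ and
\[
X_t^2\le16(p_t-p_t^*)^2 .
\]
The key inequality is $\E[(p_t-p_t^*)^2\mid\cF_t]\le\rho_t$. It follows from first-order optimality of the comparator over the convex ball. The map $a\mapsto\E[(y_t-a^\top\phi)^2\mid\cF_t]$ is a convex quadratic with Hessian $2\E[\phi\phi^\top\mid\cF_t]$. For any feasible $a$, the risk excess over the constrained minimizer $a^*$ is at least the quadratic term $\E[((a-a^*)^\top\phi)^2\mid\cF_t]$, because the linear term is nonnegative by the variational inequality. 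This holds even with singular answer covariance and a possibly nonunique or boundary minimizer. Therefore
\[
\E[X_t^2\mid\cF_t]\le16\rho_t .
\]

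\textbf{Step 3: Freedman with peeling, then combine.} Apply a uniform-in-time Freedman-type inequality to the martingale difference sequence $\xi_t=\rho_t-X_t$. These satisfy $\xi_t\le4$, since $X_t\ge-4$ and $\rho_t\le4$, and have conditional variance at most $16\rho_t$. I would use the exponential supermartingale
\[
\exp\Bigl(\lambda\sum_s\xi_s-c(\lambda)\sum_s16\rho_s\Bigr)
\]
with a fixed $\lambda$, say $\lambda=1/32$, chosen so that $c(\lambda)\cdot16\le1/2$. Ville's maximal inequality then gives, with probability at least $1-\zeta$ and simultaneously for all $t$,
\[
\sum_{s\le t}\rho_s-\sum_{s\le t}X_s\le\tfrac12\sum_{s\le t}\rho_s+32\log(1/\zeta).
\]
A fixed $\lambda$ avoids any union bound over time. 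Rearranging and inserting the pathwise bound $\sum_{s\le t}X_s\le A_t$ from \cref{eq:pathwise-response} yields
\[
\sum_{s\le t}\rho_s\le2A_t+64\log(1/\zeta).
\]

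\textbf{Main obstacle.} The hard part is tuning constants so that the factor is exactly $2$ and the additive term exactly $64\log(1/\zeta)$. With increments bounded by $b=4$, the bound $e^{x}\le1+x+x^2$ for $x\le1$ requires $\lambda b\le1$. We need
\[
\lambda^2\cdot16\rho_t\le\tfrac{\lambda}{2}\rho_t,
\]
that is, $\lambda\le1/32$. This gives an additive term of $\log(1/\zeta)/\lambda=32\log(1/\zeta)$, which doubles to $64\log(1/\zeta)$ after rearranging. So the constants close with $\lambda=1/32$.
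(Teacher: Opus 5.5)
Your proposal is correct and is essentially the paper's proof. It uses the same variational-inequality bound $\E[X_t^2\mid\cF_t]\le16\rho_t$, and at $\lambda=1/32$ your process is exactly the paper's $\exp\{\sum_{s\le t}\rho_s/64-\sum_{s\le t}X_s/32\}$ with Ville's inequality. The only difference is that you apply $e^x\le1+x+x^2$ to the centered increment, while the paper applies $e^u\le1+u+\tfrac12u^2e^{|u|}$ to $-\lambda X_t$.

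Two harmless slips. First, $\xi_t=\rho_t-X_t$ is only bounded by $8$, not $4$, but this still gives $\lambda\xi_t\le1/4$. Second, the condition in Step~3 should read $16c(\lambda)\le\lambda/2$, as your final paragraph correctly has.
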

\begin{proof}
The pathwise comparison alone concerns observed losses. To obtain a
conditional-risk bound, we show that the loss difference has a second
moment controlled by its conditional mean, then use a supermartingale.

\medskip\noindent\textbf{Second-moment control.} On a query, write $v_t=p_t-p_t^*$. Conditional
optimality on the convex coefficient ball implies its first-order
variational inequality
$\E[(p_t^*-y_t)v_t\mid\cF_t]\ge0$.
This follows by taking the directional derivative toward the feasible,
$\cF_t$-measurable coefficient $a_{t,I_t}$; boundedness justifies exchanging
this derivative with conditional expectation. Hence
\begin{equation}
\rho_t=\E[v_t^2+2(p_t^*-y_t)v_t\mid\cF_t]
\ge\E[v_t^2\mid\cF_t]\ge0.
\label{eq:bernstein-response}
\end{equation}
Because $|y_t|,|p_t|,|p_t^*|\le1$, we have $|X_t|\le4$ and
$X_t^2=v_t^2(p_t+p_t^*-2y_t)^2\le16v_t^2$. Thus
$\E[X_t^2\mid\cF_t]\le16\rho_t$ and $0\le\rho_t\le4$.
\medskip\noindent\textbf{Uniform concentration.}
For $\lambda=1/32$, use
$e^u\le1+u+\tfrac12u^2e^{|u|}$ to obtain
\begin{align*}
\E[e^{-\lambda X_t}\mid\cF_t]
&\le1-\lambda\rho_t+8\lambda^2e^{1/8}\rho_t\\
&\le1-\lambda\rho_t/2
\le e^{-\rho_t/64},
\end{align*}
where $8\lambda e^{1/8}<1/2$ verifies the middle inequality.
Define
\[
\mathcal M_t=
\exp\left\{\sum_{s\le t}\rho_s/64-\sum_{s\le t}X_s/32\right\},
\qquad \mathcal M_0=1.
\]
The preceding bound gives
$\E[\mathcal M_t\mid\cF_t]\le\mathcal M_{t-1}$.
Thus $(\mathcal M_t)$ is a nonnegative supermartingale with respect to
$(\cF_{t+1})$. Ville's maximal inequality
for nonnegative supermartingales \citep{howard2020timeuniform} bounds its probability
of ever exceeding $1/\zeta$ by $\zeta$. On the complementary event,
$\sum_{s\le t}\rho_s\le2\sum_{s\le t}X_s+64\log(1/\zeta)$ for every
prefix. Apply \cref{eq:pathwise-response}.
\end{proof}

The conversion relies on the variational inequality for the constrained
optimum. It does not require the oracle response to equal a conditional mean,
and it covers zero answer variance, boundary optima and nonunique optima.

\subsection{Proof of the response-error bound}
\label{app:response-certificate-proof}
\responsecertificate*
\begin{proof}
For the identity fits, \cref{lem:constrained-regret,lem:pool} give the
pathwise budget $A_t$ in \cref{eq:identity-pathwise-budget}. Under
\cref{ass:response-model}, the fixed comparator attains $W_{t,I_t}^*$
on every queried round. Thus the conditional mean of its normalized loss
difference $X_t$ is exactly the gap $\rho_t$ defined in \cref{sec:method}.
Apply \cref{lem:response-conversion} with
$\zeta=\delta/3$. With probability at least $1-\delta/3$, simultaneously
for all $t$, this gives
$\sum_{s\le t}\rho_s\le2A_t+64\log(3/\delta)=E_t$.
The equality is the definition in \cref{eq:identity-envelope}.
Finally, concavity of $\log$ and $\sum_jN_{t,j}\le T$ give
$A_t\le K\lambda_I+8Kp_I\log(1+T/(Kp_I\lambda_I))+4K\log2/\eta$.
Substitution yields $E_t\le\overline E_T$ for every $t\le T$.
\end{proof}

\subsection{Risk confidence with response-learning bias}
\label{app:risk-confidence}
We next bound the errors in the two risk estimates. The free labels have
centered noise; the queried labels also contain response-learning bias.
\begin{restatable}[Simultaneous risk confidence]{lemma}{riskconfidence}
\label{lem:risk-confidence}
Under the hypotheses and calibration of \cref{thm:main-regret}, with
probability at least $1-\delta$, the response bound
$\sum_{s\le t}\rho_s\le E_t$ and the free-risk bound below hold
simultaneously for every $t\le T$. The queried-risk bound also holds
for every $j\in\cA_t$ on each such round:
\begin{equation}
|\widehat L_t^0-L_t^0|
\le R^2B_0\|x_t^0\|_{(V_t^0)^{-1}}.
\label{eq:free-confidence}
\end{equation}
\begin{equation}
|\widetilde W_{t,j}-W_{t,j}^*|
\le8G^2B_*u_{t,j}
\label{eq:queried-confidence}
\end{equation}
\end{restatable}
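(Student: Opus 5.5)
The plan is to split the failure probability into three parts of $\delta/3$ each and intersect the three events.

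\textbf{Response event.} \Cref{lem:response-certificate} gives $\sum_{s\le t}\rho_s\le E_t\le\overline E_T$ for all $t\le T$ with probability at least $1-\delta/3$.

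\textbf{Free-risk bound.} The labels $(Y_s-h_s)^2/R^2\in[0,1]$ have conditional mean $(x_s^0)^\top\theta_0^*$ by \cref{ass:risk-model}. The centered noise is therefore $1/2$-sub-Gaussian by Hoeffding's lemma, and the features are $\cF_s$-measurable. The self-normalized bound of \citet{abbasi2011improved} then gives
\[
\|\widehat\theta_t^0-\theta_0^*\|_{V_t^0}\le\tfrac12\sqrt{\log(\det V_t^0/\det\Lambda_0)+2\log(3/\delta)}+Q_0
\]
uniformly in $t$, with probability $1-\delta/3$. Bounding the log-determinant by \cref{eq:logdet-trace} gives $B_0$. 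Cauchy--Schwarz then yields $|(x_t^0)^\top\widehat\theta_t^0-L_t^0/R^2|\le B_0\|x_t^0\|_{(V_t^0)^{-1}}$. Clipping to $[0,1]$ does not increase this error because $L_t^0/R^2\in[0,1]$, which gives \cref{eq:free-confidence}.

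\textbf{Queried-risk bound.} On queried rounds, write
\[
Z_s=c_s^\top\theta_*+\rho_s/4+\xi_s,
\]
using \cref{eq:upward-bias} and \cref{ass:risk-model}. Here $\xi_s=Z_s-\E[Z_s\mid\cF_s]$ is $1/2$-sub-Gaussian because $Z_s\in[0,1]$. Then
\[
\widehat\theta_t^*-\theta_*=(V_t^*)^{-1}\Bigl[-\Lambda_*\theta_*+\sum_s\omega_sc_s\xi_s+\sum_s\omega_sc_s\rho_s/4\Bigr].
\]
The first two terms are the standard weighted ridge terms. The weights $\omega_s$ are $\cF_s$-measurable, so $\sqrt{\omega_s}c_s$ are predictable features of norm at most one, and $\sqrt{\omega_s}\xi_s$ remains $1/2$-sub-Gaussian because $\omega_s\le1$. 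The self-normalized bound therefore controls these two terms in $\|\cdot\|_{V_t^*}$ by $B_*$ with probability $1-\delta/3$.

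The bias term is the crux, which I expect to be the main obstacle; it follows the argument of \citet{he2022nearly}. Its $V_t^*$-norm is at most
\[
\sum_s\omega_s\rho_s\|c_s\|_{(V_t^*)^{-1}}/4\le\sum_s\omega_s\rho_s\|c_s\|_{(V_s^*)^{-1}}/4,
\]
using $V_t^*\succeq V_s^*$ for $s<t$. Since $\omega_su_{s,I_s}\le\tau$, this is at most $\tau\sum_s\rho_s/4\le\tau\overline E_T/4=B_*$ on the response event; this is \cref{eq:main-bias-contraction}. One must check that the weight uses the pre-round design $V_s^*$, which the monotonicity step handles, and that the bias is nonnegative, so no sign issues arise.

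\textbf{Conclusion.} Combining the pieces gives $\|\widehat\theta_t^*-\theta_*\|_{V_t^*}\le2B_*$. Cauchy--Schwarz with $c_{t,j}$ and multiplication by $4G^2$ then give $|\widetilde W_{t,j}-W_{t,j}^*|\le8G^2B_*u_{t,j}$, which is \cref{eq:queried-confidence}. A union bound over the three events yields probability at least $1-\delta$.
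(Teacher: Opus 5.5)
Your proposal is correct and follows essentially the same route as the paper's proof. Both split the failure probability three ways at $\delta/3$, use Hoeffding to get $1/2$-sub-Gaussian noise for the self-normalized bound with predictably weighted features $\sqrt{\omega_s}c_s$, and control the response-learning bias term via $V_t^*\succeq V_s^*$ and $\omega_s u_{s,I_s}\le\tau$, giving $\tau\overline E_T/4=B_*$.
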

\begin{proof}
We allocate failure probability $\delta/3$ to each of the response bound,
free-risk noise, and queried-risk noise. The weights are fixed before the
answer arrives, which preserves the conditional noise bounds.

\medskip\noindent\textbf{Regression noise.}
For the noise terms, we use the following standard consequence of
self-normalization \citep{abbasi2011improved}. Let $x_s\in\R^{d_V}$ be $\cF_s$-measurable,
let $\Lambda\in\R^{d_V\times d_V}$ be fixed and positive definite, and let
$\epsilon_s$ be $\cF_{s+1}$-measurable noise satisfying, for some $\sigma\ge0$,
\[
\E[\epsilon_s\mid\cF_s]=0,
\qquad
\E[e^{\gamma\epsilon_s}\mid\cF_s]\le e^{\gamma^2\sigma^2/2}
\quad\text{for every }\gamma\in\R.
\]
For $\zeta\in(0,1)$, with probability at least $1-\zeta$ simultaneously
for all prefixes,
\begin{equation}
\left\|\sum_{s<t}x_s\epsilon_s\right\|_{V_t^{-1}}
\le\sigma\sqrt{\log\frac{\det V_t}{\det\Lambda}+2\log(1/\zeta)},
\quad V_t=\Lambda+\sum_{s<t}x_sx_s^\top.
\label{eq:self-normalization}
\end{equation}
The result extends to the fixed positive-definite prior by whitening $x_s$ by
$\Lambda^{-1/2}$. The assumptions concern conditional moment-generating
functions and predictability; neither independent covariates nor independent
rounds are needed. For a variable in an interval of length one, its centered
version is conditionally $1/2$-sub-Gaussian by Hoeffding's lemma
\citep{hoeffding1963probability}.

\medskip\noindent\textbf{Free-risk confidence.}
Apply \cref{eq:self-normalization} with failure probability $\delta/3$ to the
centered free-risk labels. The uncentered labels lie in $[0,1]$, and their conditional
means are $(x_s^0)^\top\theta_0^*$ by \cref{ass:risk-model}.
The ridge identity, nonexpansiveness of clipping onto $[0,1]$, and
$\|\Lambda_0\theta_0^*\|_{(V_t^0)^{-1}}\le
\|\theta_0^*\|_{\Lambda_0}$ imply simultaneously
\cref{eq:free-confidence}.
Here \cref{eq:logdet-trace} bounds the determinant ratio by the expression in
\cref{eq:critic-calibration} for all $t\le T$.

\medskip\noindent\textbf{Queried-risk confidence.}
For the queried labels, put zero covariates and zero noise on free rounds.
On a query, write $c_s=c_{s,I_s}$. This feature is predictable, and
\cref{eq:upward-bias} gives
\[
Z_s=c_s^\top\theta_*+\rho_s/4+\epsilon_s,
\qquad \E[\epsilon_s\mid\cF_s]=0,
\]
where $\epsilon_s$ is conditionally $1/2$-sub-Gaussian.
Apply \cref{eq:self-normalization} again, at failure probability
$\delta/3$, to $x_s'=\sqrt{\omega_s}c_s$ and
$\epsilon_s'=\sqrt{\omega_s}\epsilon_s$ on queries and zero on free rounds.
Both $\omega_s$ and $c_s$ are $\cF_s$-measurable. Since
$0<\omega_s\le1$, the transformed noise remains conditionally
$1/2$-sub-Gaussian. The resulting martingale sum is
$\sum_s\omega_sc_s\epsilon_s$, and the determinant ratio is at most
$(1+T/(d_c\lambda_*))^{d_c}$. Including the prior norm bounds the
martingale and regularization terms by $B_*$ in the $(V_t^*)^{-1}$ norm,
simultaneously over $t\le T$.

The remaining term is the accumulated response-learning bias,
$b_t=\sum_{s<t:I_s\ne0}\omega_sc_s\rho_s/4$.
Indeed, the queried ridge error decomposes as
\[
\widehat\theta_t^*-\theta_*
=(V_t^*)^{-1}\left\{
\sum_{s<t:I_s\ne0}\omega_sc_s\epsilon_s+b_t-\Lambda_*\theta_*\right\}.
\]

On the response event of \cref{lem:response-conversion},
$\sum_{s<t}\rho_s\le\overline E_T$. The triangle inequality,
$\rho_s\ge0$, and $V_t^*\succeq V_s^*$ for $s<t$ give
\begin{equation}
\begin{split}
\|b_t\|_{(V_t^*)^{-1}}
&\le\tfrac14\sum_{s<t:I_s\ne0}\omega_s\rho_s\|c_s\|_{(V_t^*)^{-1}}\\
&\le\tfrac14\sum_{s<t:I_s\ne0}\omega_s\rho_su_{s,I_s}
 \le\tfrac14\tau\overline E_T=B_*.
\end{split}
\label{eq:main-bias-contraction}
\end{equation}
The last inequality uses $\omega_su_{s,I_s}\le\tau$, including zero width.
When there are no past queries, $b_t=0$. Cauchy--Schwarz and multiplication
by $4G^2$ now give
\cref{eq:queried-confidence}
for every eligible expert and every $t\le T$ on the common event.
The two risk-confidence events and the response conversion event have total
failure probability at most $\delta$. A vector-norm confidence event controls
every current feature, so no further union bound over experts is needed.
The noise inequality is established before intersecting with the response
event; its conditional moment-generating-function assumption is not asserted
after conditioning on that event.
\end{proof}

\subsection{Proof of the routing theorem}
\routingregret*
\explicitroutingbound*

\begin{proof}[Proof of \cref{thm:main-regret,prop:explicit-routing-bound}]
By \cref{lem:risk-confidence}, the response and risk-estimation bounds
hold on a common event with probability at least $1-\delta$.
We work on that event and first compare costs on one round, then sum the
result using bounds on regression widths.

\medskip\noindent\textbf{One-round comparison.} For a queried
round write $u_t=u_{t,I_t}$, and set $u_t=0$ on free rounds. Define a lower
bound on optimal queried risk and the free-risk estimation error by
\[
\underline W_{t,j}=\widetilde W_{t,j}-8G^2B_*u_{t,j},\qquad
\epsilon_t^0=\widehat L_t^0-L_t^0.
\]
With $\alpha=2B_*$, \cref{eq:queried-confidence} gives $\underline W_{t,j}\le W_{t,j}^*$.
The score is $S_{t,j}=\widehat L_t^0-\underline W_{t,j}-\beta_{t,j}$.
If an oracle action $j^*\ne0$ queries, then
$S_{t,j^*}\ge L_t^0-C_t^*+\epsilon_t^0$.
For the selected queried action $j$, its actual conditional cost is
$W_{t,j}^*+G^2\rho_t+\beta_{t,j}$ and
\begin{equation}
W_{t,j}^*-\underline W_{t,j}
\le16G^2B_*u_{t,j}.
\label{eq:selected-slack}
\end{equation}
We compare the selected action with an oracle action, separating the four
possible pairs of free and queried choices. If both actions query, score
maximality gives
$\underline W_{t,j}+\beta_{t,j}\le\underline W_{t,j^*}+\beta_{t,j^*}\le C_t^*$, and hence
\[
\E[C_t\mid\cF_t]-C_t^*
\le W_{t,j}^*-\underline W_{t,j}+G^2\rho_t;
\]
the free-risk error cancels. If the selected action queries and the oracle
is free, $S_{t,j}\ge0$ gives the same bound plus $\epsilon_t^0$.
If the selected action is free and the oracle queries,
$0\ge S_{t,j^*}\ge L_t^0-C_t^*+\epsilon_t^0$, so the regret is at most
$-\epsilon_t^0$. If both are free the regret is zero. Let $\one\{\cdot\}$ denote the indicator of an event. In all cases,
\begin{equation}
\E[C_t\mid\cF_t]-C_t^*
\le |\epsilon_t^0|+\one\{I_t\ne0\}G^2
\{16B_*u_t+\rho_t\}.
\label{eq:one-step-regret}
\end{equation}
This derivation also handles an empty eligible set.

\medskip\noindent\textbf{Summing regression widths.}
To sum the uncertainty terms, we bound cumulative regression widths by the
number of observations. Let $x_i\in\R^{d_V}$ be features of norm at most one,
where $d_V$ is a positive integer. Let $\Lambda\succeq\lambda\mathrm{Id}_{d_V}$
with $\lambda>0$ and $V_n=\Lambda+\sum_{i<n}x_ix_i^\top$.
Then $v_n=x_n^\top V_n^{-1}x_n\le\lambda^{-1}$ and
$v_n\le(1+\lambda^{-1})\log(1+v_n)$, using
$\log(1+v)\ge v/(1+v)$ for $v\ge0$.
The determinant lemma and \cref{eq:logdet-trace} give
\[
\sum_{n=1}^Nv_n\le(1+\lambda^{-1})d_V\log(1+N/(d_V\lambda)).
\]
Cauchy--Schwarz therefore gives
$\sum_{n=1}^N\sqrt{v_n}\le\mathcal W(N,d_V,\lambda)$, also valid for $N=0$.
Apply this bound to the free-risk features on all $T$ rounds. The free-risk regression
receives an outcome on every round, including
rounds on which an expert is queried.
For the queried-risk design, the determinant lemma and
$\omega_su_s^2\le\lambda_*^{-1}$ give
\[
\sum_{s:I_s\ne0}\omega_su_s^2
\le(1+\lambda_*^{-1})\log\frac{\det V_{T+1}^*}{\det\Lambda_*}
\le\mathcal H(N_T).
\]
The whitened update matrix has trace at most $N_T/\lambda_*$, which gives
the last inequality. On rounds with $\omega_s=1$, Cauchy--Schwarz bounds
the sum of widths by $\sqrt{N_T\mathcal H(N_T)}$. On the remaining
queried rounds, $\omega_su_s^2=\tau u_s$, so
\[
\sum_{s:I_s\ne0}u_s
\le\sqrt{N_T\mathcal H(N_T)}+\mathcal H(N_T)/\tau.
\]
This also holds when $N_T=0$. Sum \cref{eq:one-step-regret}, use
\cref{eq:free-confidence}, $\sum_t\rho_t\le E_T$, and substitute
$\tau=4B_*/\overline E_T$.
This proves \cref{eq:explicit-main-bound} and hence
\cref{prop:explicit-routing-bound}.

\medskip\noindent\textbf{Horizon dependence.}
For $\alpha=2B_*$ and fixed problem parameters,
$E_T\le C_1\log((T+1)/\delta)$ and
$B_v\le C_2\sqrt{\log((T+1)/\delta)}$. The width envelope is at most
$C_3\sqrt{T\log(T+1)}$, and the additive response term is at most
$C_4\log((T+1)/\delta)\log(T+1)$ for a fixed finite $C_4$.
Using $\log(T+1)\le\sqrt T$ gives
\cref{eq:main-rate}, with constants depending only on the quantities listed
in the theorem. This proves \cref{thm:main-regret}.
\end{proof}

\input{appendix/d_approximate_models}

%% file: appendix/d_approximate_models.tex
\subsection{Approximate response and risk models}
\label{app:approximate-models}
The exact models in \cref{ass:response-model,ass:risk-model} connect
observations across rounds. Here fixed models may approximate the optimal
responses and risks; the comparator remains the same conditional oracle.
The response proof must account for the fixed comparator's approximation
error. The risk proof must also handle signed model errors, separately
from the nonnegative bias caused by response learning.

\medskip\noindent\textbf{Approximation assumptions.}
Fix $T\ge1$, $\delta\in(0,1)$ and deterministic tolerances
$\varepsilon_R,\varepsilon_0,\varepsilon_*\ge0$ before interaction.
For each identity, let $\overline\Theta_j\in\R^{d_\ell\times(q+1)}$
be a fixed matrix with $\|\overline\Theta_j\|_F\le1$, and write
$\overline a_{t,j}=\overline\Theta_j^\top\ell_{t,j}$.
Assume, almost surely for every $t\le T$ and $j\in\cA_t$,
\begin{equation}
0\le\epsilon_{t,j}:=
\E\!\left[\left(e_{t,j}/G-\overline a_{t,j}^\top
\phi_q(d_{t,j})\right)^2\mid\cF_t\right]
-W_{t,j}^*/G^2\le\varepsilon_R.
\label{eq:approximate-response-model}
\end{equation}
Nonnegativity follows from feasibility. The error measures excess risk
over the conditional ball optimum, rather than distance between coefficient
vectors. This distinction allows boundary optima and singular answer laws.
Also assume fixed parameters $\overline\theta_0\in\R^{d_0}$ and
$\overline\theta_*\in\R^{d_c}$ satisfy
$\|\overline\theta_0\|_{\Lambda_0}\le Q_0$ and
$\|\overline\theta_*\|_{\Lambda_*}\le Q_*$, with
\begin{equation}
\left|L_t^0/R^2-(x_t^0)^\top\overline\theta_0\right|\le\varepsilon_0,
\qquad
\left|W_{t,j}^*/(4G^2)-c_{t,j}^\top\overline\theta_*\right|
\le\varepsilon_*.
\label{eq:approximate-risk-models}
\end{equation}
The free-risk inequality holds almost surely on every round, including
when $\cA_t$ is empty; the queried-risk inequality holds almost surely
for every eligible expert on every round.
These parameters and matrices are fixed before the future observations
are realized. Their values need not be known to the learner. Calibration
uses the supplied bounds $\varepsilon_R,\varepsilon_*$; the free-risk
tolerance $\varepsilon_0$ enters the guarantee but not the calibration.

Let $A_t$ be the pathwise response budget in
\cref{eq:identity-pathwise-budget}, and define its deterministic envelope
and the cumulative response approximation error by
\begin{equation}
\begin{split}
\overline A_T&=K\lambda_I+8Kp_I
\log\!\left(1+\frac{T}{Kp_I\lambda_I}\right)
+\frac{4K\log2}{\eta},\\
\mathcal D_t&=\sum_{s\le t:I_s\ne0}\epsilon_{s,I_s},
\qquad p_I=d_\ell(q+1).
\end{split}
\label{eq:approximate-pathwise-budgets}
\end{equation}
The response certificate and its horizon envelope are
\begin{equation}
E_t^{\rm app}=2A_t+3\mathcal D_t+128\log(3/\delta),\qquad
\overline E_T^{\rm app}=2\overline A_T+3T\varepsilon_R
+128\log(3/\delta).
\label{eq:approximate-envelopes}
\end{equation}
Unlike $A_t$, the quantity $\mathcal D_t$ is unobserved. The algorithm uses
the deterministic envelope $\overline E_T^{\rm app}$, which is at least
$E_t^{\rm app}$ for every $t\le T$. With $B_v$ as in
\cref{eq:main-noise-radius}, set
\begin{equation}
B_*^{\rm app}=B_*+\varepsilon_*\sqrt T,\qquad
\alpha=2B_*^{\rm app},\qquad
\tau=\frac{4B_*^{\rm app}}{\overline E_T^{\rm app}}.
\label{eq:approximate-calibration}
\end{equation}

\begin{restatable}[Routing under approximate models]{proposition}{approximaterouting}
\label{prop:approximate-models}
Assume the bounded non-interventional protocol of \cref{sec:problem} and
the approximation assumptions above, including the fixed feasible matrices,
parameter-norm bounds and \cref{eq:approximate-response-model,eq:approximate-risk-models}.
Choose feature maps and positive-definite priors before interaction, with
public feature vectors of norm at most one. Run the
\ORUCB{} updates with exact constrained response fits, full-history risk
regressions, $0<\eta\le1/2$, and \cref{eq:approximate-calibration}.
With probability at least $1-\delta$ over the joint interaction law,
\begin{equation}
\begin{split}
\Reg_T\le{}&R^2\left[(B_0+\varepsilon_0\sqrt T)
\mathcal W(T,d_0,\lambda_0)+T\varepsilon_0\right]\\
&+G^2\left[16B_*^{\rm app}\sqrt{N_T\mathcal H(N_T)}
+4\overline E_T^{\rm app}\mathcal H(N_T)
+E_T^{\rm app}+8T\varepsilon_*\right],
\end{split}
\label{eq:approximate-regret-bound}
\end{equation}
with $\mathcal W,\mathcal H$ defined in \cref{eq:width-envelope,eq:main-widths} and
$\Reg_T$ measured against the original oracle in \cref{eq:regret}.
\end{restatable}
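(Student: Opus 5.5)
The plan is to rerun the four-step argument behind \cref{thm:main-regret} and \cref{prop:explicit-routing-bound}, adding approximation errors at three places: the response certificate, the queried-risk regression and the free-risk regression. The failure probability is split into three parts of $\delta/3$ each, as before.

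\textbf{Response certificate.} For queried rounds, keep $y_t,p_t$ and replace the comparator by $\overline p_t=\ell_{t,I_t}^\top\overline\Theta_{I_t}\phi_q(d_{t,I_t})$. Set $\overline X_t=(y_t-p_t)^2-(y_t-\overline p_t)^2$. \Cref{lem:constrained-regret,lem:pool} apply unchanged because $\overline\Theta_j$ is feasible, so $\sum_{s\le t}\overline X_s\le A_t$ pathwise.

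The conditional mean satisfies $\E[\overline X_t\mid\cF_t]=\rho_t-\epsilon_{t,I_t}$. The variational inequality $\E[(p_t^*-y_t)v\mid\cF_t]\ge0$ at the true ball optimum $p_t^*$ still holds. Write $\overline X_t=X_t-X_t'$, where $X_t'=(y_t-\overline p_t)^2-(y_t-p_t^*)^2$ has mean $\epsilon_{t,I_t}$. By the argument of \cref{lem:response-conversion}, both $X_t$ and $X_t'$ have second moments at most $16$ times their means. Hence $\E[\overline X_t^2\mid\cF_t]\le 32(\rho_t+\epsilon_{t,I_t})$.

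The same exponential supermartingale with $\lambda=1/64$ then gives, with probability $1-\delta/3$ uniformly in $t$,
\[
\sum\rho\le 2A_t+3\mathcal D_t+128\log(3/\delta)=E_t^{\rm app}.
\]
The factor $3$ comes from the $\epsilon$ term, which enters once through the mean and once through the variance proxy. The constants need rechecking: one must verify $\E e^{-\lambda\overline X}\le \exp(-\rho/128+c\,\epsilon)$ with the stated coefficients. Finally, $\mathcal D_t\le T\varepsilon_R$ and $A_t\le\overline A_T$ give $E_t^{\rm app}\le\overline E_T^{\rm app}$.

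\textbf{Risk confidence.} The queried label now decomposes as
\[
Z_s=c_s^\top\overline\theta_*+\xi_s+\rho_s/4+\epsilon_s,\qquad |\xi_s|\le\varepsilon_*.
\]
The ridge error gains the extra term $(V_t^*)^{-1}\sum\omega_sc_s\xi_s$. By Cauchy--Schwarz, using $\omega_s\le1$ and $\|\sum\omega_s c_s\xi_s\|_{(V_t^*)^{-1}}\le\varepsilon_*\sqrt{\sum\omega_s c_s^\top(V_t^*)^{-1}\omega_sc_s}$, this term is at most $\varepsilon_*\sqrt{T}$. The bound uses $\sum_s\omega_s c_s c_s^\top\preceq V_t^*$, the standard misspecification estimate.

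The bias term is handled as in \eqref{eq:main-bias-contraction}, with $\overline E_T^{\rm app}$ in place of $\overline E_T$, and contributes at most $B_*^{\rm app}$. Converting from $\overline\theta_*$ to the true risk $W^*$ costs an additional $4G^2\varepsilon_*$. This yields
\[
|\widetilde W_{t,j}-W_{t,j}^*|\le 8G^2B_*^{\rm app}u_{t,j}+4G^2\varepsilon_* .
\]
The free-risk bound follows the same way:
\[
|\widehat L^0_t-L^0_t|\le R^2[(B_0+\varepsilon_0\sqrt T)\|x_t^0\|_{(V_t^0)^{-1}}+\varepsilon_0].
\]
Clipping is harmless here because $L_t^0\in[0,R^2]$.

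\textbf{One-round comparison and summation.} With $\alpha=2B_*^{\rm app}$, the lower bound $\underline W$ is now valid only up to the additive $4G^2\varepsilon_*$. Redoing the four-case analysis leading to \eqref{eq:one-step-regret} gives the per-round regret bound
\[
|\epsilon_t^0|+\one\{I_t\ne0\}G^2\{16B_*^{\rm app}u_t+\rho_t\}+8G^2\varepsilon_* .
\]
Here $8G^2\varepsilon_*$ covers one shift for the selected action and one for the oracle's action. Summing with the width lemmas, $\sum u_t\le\sqrt{N_T\mathcal H}+\mathcal H/\tau$ with $\tau=4B_*^{\rm app}/\overline E_T^{\rm app}$, and $\sum\rho\le E_T^{\rm app}$ produces \eqref{eq:approximate-regret-bound}.

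\textbf{Main obstacle.} The hardest step is the supermartingale conversion in the first part: handling the signed mean $\rho_t-\epsilon_t$ and getting exactly the constants $3$ and $128$. My first attempt will bound $\E e^{-\lambda\overline X}$ through $e^u\le1+u+\tfrac12u^2e^{|u|}$, using $|\overline X|\le4$ and the combined variance proxy, then tune $\lambda$.
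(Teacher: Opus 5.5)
Your plan is the paper's proof step for step. It uses the same comparator $\overline p_t$ and the same signed-mean supermartingale at $\lambda=1/64$. Your split $\overline X_t=X_t-X_t'$ with $(a-b)^2\le 2a^2+2b^2$ gives the paper's variance proxy $32(\rho_t+\epsilon_{t,I_t})$. Since $16\lambda e^{4\lambda}=e^{1/16}/4<1/2$, you get $\exp(-\rho_t/128+3\epsilon_{t,I_t}/128)$, so the constants $3$ and $128$ come out as you expect. The radius is the same, $B_*+\varepsilon_*\sqrt T$ plus the bias term $\tau\overline E_T^{\rm app}/4$, and the four-case comparison is the same, with one $4G^2\varepsilon_*$ shift per queried action.

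One step is wrong as written. The inequality $\|\sum_s\omega_sc_s\xi_s\|_{(V_t^*)^{-1}}\le\varepsilon_*\sqrt{\sum_s\omega_s^2c_s^\top(V_t^*)^{-1}c_s}$ is false. Take identical features, unit weights and $\xi_s=\varepsilon_*$ over $n$ past queries: the left side is $\sqrt n$ times the right. Combined with $\sum_s\omega_sc_sc_s^\top\preceq V_t^*$, it would even give the $T$-free bound $\varepsilon_*\sqrt{d_c}$, which fails in the same example.

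The correct estimate, which is the paper's, splits the weight symmetrically. Let $U$ have columns $\sqrt{\omega_s}c_s$ and $v$ have entries $\sqrt{\omega_s}\xi_s$. Then $UU^\top\preceq V_t^*$ gives $\|(V_t^*)^{-1/2}U\|_{\rm op}\le1$, so the term is at most $\|v\|_2\le\varepsilon_*\sqrt T$. The free regression is handled the same way with $\varepsilon_0$. With this replacement your argument is complete.
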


\begin{proof}
We first adapt the conditional response certificate, then bound the two
risk estimates and repeat the one-round routing comparison.

\medskip\noindent\textbf{Response approximation.}
On a queried round, abbreviate $y_t=e_{t,I_t}/G$,
$p_t=a_{t,I_t}^\top\phi_q(d_{t,I_t})$ and
$\overline p_t=\overline a_{t,I_t}^\top\phi_q(d_{t,I_t})$.
Let $p_t^*=(a_{t,I_t}^*)^\top\phi_q(d_{t,I_t})$ for a measurable
conditional minimizer over $\mathbb B_q$. The three coefficients are
$\cF_t$-measurable, and the predictions and target belong to $[-1,1]$.
Define
\[
X_t=(y_t-p_t)^2-(y_t-\overline p_t)^2,\qquad
\epsilon_t=\epsilon_{t,I_t},\qquad
\rho_t=\E[(y_t-p_t)^2\mid\cF_t]-W_{t,I_t}^*/G^2,
\]
and set all three quantities to zero on free rounds.
Both $\rho_t$ and $\epsilon_t$ are nonnegative, while
$\E[X_t\mid\cF_t]=\rho_t-\epsilon_t$ can have either sign.
The pathwise regression and pool bounds compare to any fixed feasible
matrices, so \cref{lem:constrained-regret,lem:pool} still give
$\sum_{s\le t}X_s\le A_t$ for every prefix.

Conditional optimality of $a_{t,I_t}^*$ gives the variational inequality
\[
\E[(p_t^*-y_t)(p_t-p_t^*)\mid\cF_t]\ge0,
\]
and the same inequality with $p_t$ replaced by $\overline p_t$.
Expanding the two excess risks therefore gives
\[
\E[(p_t-p_t^*)^2\mid\cF_t]\le\rho_t,
\qquad
\E[(\overline p_t-p_t^*)^2\mid\cF_t]\le\epsilon_t.
\]
Since $|X_t|\le4$ and
$X_t^2=(p_t-\overline p_t)^2(p_t+\overline p_t-2y_t)^2$,
\begin{equation}
\E[X_t^2\mid\cF_t]
\le16\E[(p_t-\overline p_t)^2\mid\cF_t]
\le32(\rho_t+\epsilon_t).
\label{eq:approximate-second-moment}
\end{equation}
For $\lambda=1/64$, the exponential Taylor bound used in
\cref{lem:response-conversion} now gives
\begin{align*}
\E[e^{-\lambda X_t}\mid\cF_t]
&\le1-\lambda(\rho_t-\epsilon_t)
 +16\lambda^2e^{4\lambda}(\rho_t+\epsilon_t)\\
&\le1-\lambda\rho_t/2+3\lambda\epsilon_t/2
\le\exp(-\rho_t/128+3\epsilon_t/128).
\end{align*}
The middle inequality uses $16\lambda e^{4\lambda}<1/2$.
Thus
\[
\exp\!\left\{\sum_{s\le t}\rho_s/128
-3\sum_{s\le t}\epsilon_s/128-\sum_{s\le t}X_s/64\right\}
\]
is a nonnegative supermartingale starting at one, with respect to
$(\cF_{t+1})$. Ville's inequality, at failure probability $\delta/3$,
implies simultaneously for every prefix $t\le T$
\begin{equation}
\sum_{s\le t}\rho_s
\le2\sum_{s\le t}X_s+3\mathcal D_t+128\log(3/\delta)
\le E_t^{\rm app}.
\label{eq:approximate-response-certificate}
\end{equation}

\medskip\noindent\textbf{Risk approximation.}
Write the two signed model residuals as
$r_t^0=L_t^0/R^2-(x_t^0)^\top\overline\theta_0$ and
$r_{t,j}^*=W_{t,j}^*/(4G^2)-c_{t,j}^\top\overline\theta_*$.
On a query, the normalized label satisfies
\[
Z_t=c_{t,I_t}^\top\overline\theta_*+r_{t,I_t}^*
+\rho_t/4+\epsilon_t^{\rm noise},\qquad \E[\epsilon_t^{\rm noise}\mid\cF_t]=0.
\]
Because $Z_t\in[0,1]$, the centered noise is conditionally
$1/2$-sub-Gaussian. Applying \cref{eq:self-normalization} to the
predictably weighted features bounds the noise and regularization terms
by $B_*$ in the $(V_t^*)^{-1}$ norm, with failure probability $\delta/3$.

The signed model errors admit a separate bound. Let $U$ have columns
$\sqrt{\omega_s}c_{s,I_s}$ over past queries, and let $v$ have entries
$\sqrt{\omega_s}r_{s,I_s}^*$. Since $V_t^*=\Lambda_*+UU^\top$,
$\|(V_t^*)^{-1/2}U\|_{\rm op}\le1$, and hence
\begin{equation}
\left\|\sum_{s<t:I_s\ne0}\omega_sc_{s,I_s}r_{s,I_s}^*
\right\|_{(V_t^*)^{-1}}
\le\|v\|_2
\le\varepsilon_*\sqrt T.
\label{eq:approximate-risk-contraction}
\end{equation}
Here $\|\cdot\|_{\rm op}$ denotes the Euclidean operator norm; its bound
follows from $UU^\top\preceq V_t^*$. On the response event,
design monotonicity and $\omega_su_{s,I_s}\le\tau$ also give
\[
\left\|\tfrac14\sum_{s<t:I_s\ne0}\omega_sc_{s,I_s}\rho_s
\right\|_{(V_t^*)^{-1}}
\le\tfrac\tau4\sum_{s<t}\rho_s
\le\tfrac\tau4\overline E_T^{\rm app}=B_*^{\rm app}.
\]
The noise, regularization and signed model-error terms total at most
$B_*^{\rm app}$, so evaluating the ridge error at a current feature and
including its own residual gives
\begin{equation}
|\widetilde W_{t,j}-W_{t,j}^*|
\le8G^2B_*^{\rm app}u_{t,j}+4G^2\varepsilon_*
\quad(t\le T,\ j\in\cA_t).
\label{eq:approximate-queried-confidence}
\end{equation}
The vector-norm event controls all eligible experts simultaneously.

For the free regression, the same matrix argument with columns $x_s^0$
and residual vector $(r_s^0)_{s<t}$ gives
\[
\left\|\sum_{s<t}x_s^0r_s^0\right\|_{(V_t^0)^{-1}}
\le\sqrt{\sum_{s<t}(r_s^0)^2}\le\varepsilon_0\sqrt T.
\]
Self-normalization at failure probability $\delta/3$, the prior-norm
bound and clipping toward the true normalized risk in $[0,1]$ imply
\begin{equation}
|\widehat L_t^0-L_t^0|
\le R^2\left[(B_0+\varepsilon_0\sqrt T)
\|x_t^0\|_{(V_t^0)^{-1}}+\varepsilon_0\right].
\label{eq:approximate-free-confidence}
\end{equation}
The response event and the two noise events hold together with probability
at least $1-\delta$. The conditional noise inequalities are established
before this intersection, as in the exact-model proof.

\medskip\noindent\textbf{Routing comparison.}
Work on that common event and put
$\underline W_{t,j}=\widetilde W_{t,j}-8G^2B_*^{\rm app}u_{t,j}$.
Equation~\eqref{eq:approximate-queried-confidence} gives
$\underline W_{t,j}\le W_{t,j}^*+4G^2\varepsilon_*$ and
$W_{t,j}^*-\underline W_{t,j}\le16G^2B_*^{\rm app}u_{t,j}+4G^2\varepsilon_*$.
If both learner and oracle query, score maximality compares their lower
risk estimates plus fees. Each expert contributes at most
$4G^2\varepsilon_*$ of model error, giving $8G^2\varepsilon_*$ in total.
If only one queries, the comparison incurs at most
$4G^2\varepsilon_*$ and the absolute free-risk error;
if both are free, regret is zero. Therefore
\begin{equation}
\begin{split}
\E[C_t\mid\cF_t]-C_t^*
\le{}&|\widehat L_t^0-L_t^0|+8G^2\varepsilon_*\\
&+\one\{I_t\ne0\}G^2
\{16B_*^{\rm app}u_{t,I_t}+\rho_t\}.
\end{split}
\label{eq:approximate-one-step}
\end{equation}
The width summations in the exact-model proof hold for any $\tau>0$,
including \cref{eq:approximate-calibration}:
\[
\sum_{t:I_t\ne0}u_{t,I_t}
\le\sqrt{N_T\mathcal H(N_T)}+\mathcal H(N_T)/\tau,
\qquad
\sum_{t=1}^T\|x_t^0\|_{(V_t^0)^{-1}}
\le\mathcal W(T,d_0,\lambda_0).
\]
Summing \cref{eq:approximate-one-step}, using
\cref{eq:approximate-response-certificate,eq:approximate-free-confidence},
and substituting $\tau=4B_*^{\rm app}/\overline E_T^{\rm app}$ proves
\cref{eq:approximate-regret-bound}. Empty query sequences and empty
eligible sets are covered by the same bounds.
\end{proof}

\medskip\noindent\textbf{Dependence on approximation error.}
For fixed problem parameters and $\delta=(T+1)^{-2}$, the right-hand side
of \cref{eq:approximate-regret-bound} is at most a constant times
\[
\sqrt T\log(T+1)
+T\varepsilon_R\log(T+1)
+T(\varepsilon_0+\varepsilon_*)\sqrt{\log(T+1)}.
\]
The constant depends on the same fixed problem quantities as in
\cref{thm:main-regret}, and not on the tolerances or $T$.
At zero approximation error this recovers the original rate;
\cref{thm:main-regret} gives the sharper response-concentration constant.
For fixed positive errors, the bound's approximation terms are linear in
$T$ up to logarithmic factors. The conditional laws may vary subject to
the stated approximation bounds.
If a known deterministic bound $\overline{\mathcal D}_T$ on
$\mathcal D_T$ is available, it can replace $T\varepsilon_R$ in the
horizon envelope. Such a bound must hold almost surely for the analyzed
interaction; a bound holding only with high probability requires its
failure probability to be included.

%% file: appendix/h_experiment_details.tex
\section{Additional experiments}
\label{app:experiment-details}
\label{app:experimental-setup}
We report (i) learning curves on the four benchmark streams
(Appendix~\ref{app:benchmark-queries}); (ii) cost and query rates at six
consultation prices (Appendix~\ref{app:fee-sensitivity}); and (iii) routing
comparisons with a common response learner
(Appendix~\ref{app:jena-routing}).
Appendix~\ref{app:data-construction} describes the data and expert construction.

\textbf{Common protocol.}
\label{app:benchmark-protocol}
We rank methods by mean squared prediction error plus the consultation
fees actually paid. A free prediction incurs no fee. All methods receive
the same public information, availability schedule, and fee for each
expert; only the selected expert's answer is revealed.
Reported real-data costs are in squared-temperature units.
We select hyperparameters by fee-inclusive cost on validation data and
report results on separate test data. This protocol applies to the four
benchmarks, every consultation price, and the matched-response comparisons.
Hyperparameters remain fixed during test evaluation; the methods continue
their prescribed online updates after each round's feedback.
The benchmark search budget is $256$ configurations per method and dataset.

\textbf{Repetitions and reporting.}
Benchmark means and sample standard deviations use five learner seeds
on each fixed stream. Deterministic methods have identical replays;
we omit zero standard deviations.
Bold values indicate the lowest displayed cost within a comparison,
including ties. Exact configurations, seeds, and trajectories are retained
with the experimental records.

\subsection{Learning on the four benchmark streams}
\label{app:benchmark-queries}
\label{app:datasets}
\label{app:benchmark-implementation}
\textbf{In brief.}
\ORUCB{} has the lowest final mean total cost among the eight methods on
all four streams. The trajectories show how these endpoint comparisons
develop as the methods learn from selected answers.

\textbf{Data.}
We use a synthetic stream, daily Melbourne temperatures
\citep{daily_min_temp_melbourne}, four-hourly Jena observations
\citep{jena_climate}, and daily Delhi observations
\citep{daily_delhi_climate}. The synthetic process alternates between
regimes favoring free prediction, direct deferral, linear correction,
and nonlinear correction. The real-data splits are chronological:
Melbourne uses a $1200$-observation fitting prefix that is also replayed
for offline warmup; Jena uses $2000$ fitting and $2000$ warmup observations;
Delhi uses $600$ fitting and $250$ warmup observations.
Jena standardization uses its full fitting-and-warmup prefix.
No scored observation enters these preprocessing fits.

\textbf{Experts.}
Synthetic and Melbourne each have twelve simulated experts in four
families, with four experts available per round. Melbourne's private
sensors combine real temperatures with controlled bias and noise.
Jena has twelve frozen models from random forests, extra trees, gradient
boosting, and radial-basis kernel ridge, with four available per round.
Delhi has six frozen tree-based models, with four to six available.
An internal ridge predictor updates after each revealed real-data outcome.
Predetermined availability schedules create absence and return, allowing
the response learner to reuse an identity's earlier observations.
The seven external baselines deploy purchased answers unchanged; their
linear, sampling, neural, discounted, and restart-based implementations
follow the methods cited in \cref{sec:experiments}.

\textbf{Metric and fees.}
Every point in \cref{fig:learning-curves} is the running mean total cost
from the first scored round, not a moving-window average.
Per-query fees are $0.07$, approximately $0.30$, approximately $0.17$,
and $0.02$ on synthetic, Melbourne, Jena, and Delhi, respectively.
Lines average the five learner seeds; bands show one sample standard deviation.

\textbf{Results.}
The final costs are $0.08092$, $4.49069$, $3.72844$, and $2.59240$,
respectively, compared with $0.08316$, $4.61400$, $4.02217$, and
$2.63729$ for the best external baseline on each stream.
\Cref{tab:main-costs} gives the complete endpoint comparison.
The plots retain the full scored intervals and every baseline.

\begin{figure}[H]
\centering
\includegraphics[width=\linewidth]{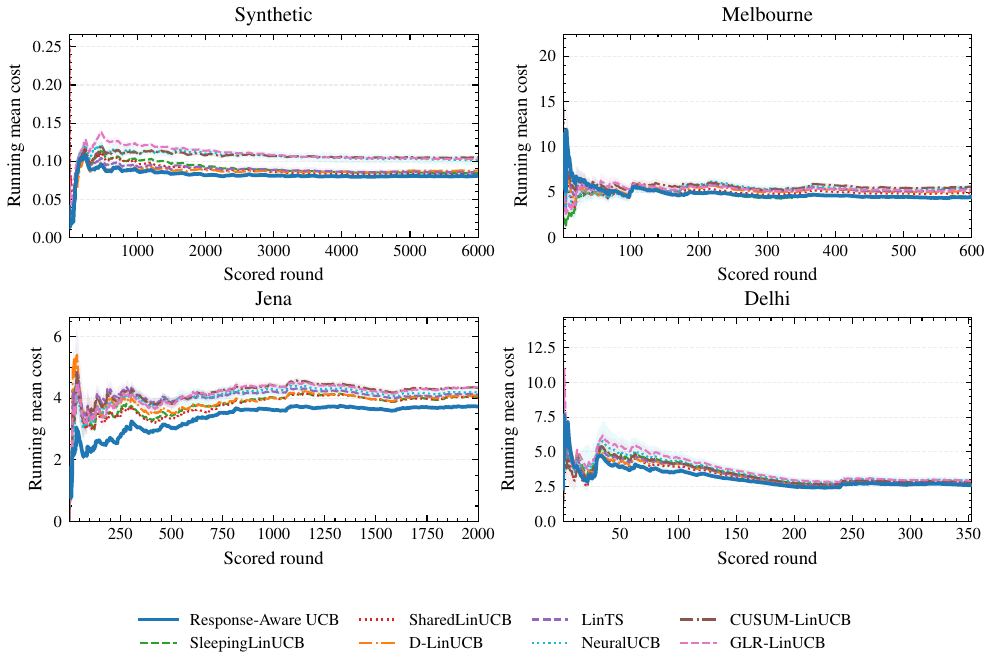}
\caption{Fee-inclusive test learning curves on all four streams.
Every scored round is included, with linear cost axes starting at zero.
Running means start at the first scored round. Shading shows one
learner-seed standard deviation where nonzero.}
\label{fig:full-learning-curves}
\label{fig:learning-curves}
\end{figure}

\subsection{Cost-sensitive consultation}
\label{app:fee-sensitivity}
\textbf{In brief.}
Across the six price-specific configurations, \ORUCB{} reduces its query
fraction from $96\%$ to $0.1\%$ while retaining lower mean total cost than
the external baselines under equal hyperparameter-search budgets.

\textbf{Data.}
We use the same Jena validation and test blocks as the benchmark.
Every price is evaluated by replaying warmup and scoring, so a new fee can
change both the purchased observations and the learned responses.

\textbf{Experts and comparisons.}
We retain the twelve frozen Jena experts, their four-family availability
schedule, and the same internal predictor.
We compare with all seven external baselines.
For each price, we select every method's configuration on the validation
block and evaluate it on the test block, using equal search budgets for
\ORUCB{} and all seven external baselines.
\label{app:price-selection}

\textbf{Metrics and fees.}
A multiplier $m$ changes the common per-query fee to
$\beta(m)\approx 0.17m$; the unrounded base fee is used in every replay.
The two metrics are mean total cost and the fraction of scored rounds
with a query. Their joint interpretation matters: avoiding a query saves
its fee but can increase prediction error.
The six multipliers $1,3,5,7.5,12.5,50$ span frequent consultation to
near-free prediction.
Reported standard deviations describe learner-seed variability.

\textbf{Results at six prices.}
\Cref{tab:price-cost} presents the external comparisons, with cost and
query percentage in separate panels.
\ORUCB{} has lower cost than every external baseline at each price.
Its query rates are $96\%$, $79.4\%$, $9.7\%$, $8.15\%$, $1.85\%$,
and $0.10\%$. The base-price configuration in this study is selected
separately from the benchmark configuration.

\begin{table}[H]
\centering\footnotesize
\setlength{\tabcolsep}{1.6pt}
\renewcommand{\arraystretch}{1.2}
\caption{Jena test performance against external baselines at six prices. Panels share the same methods and six
prices; fees and total costs are in squared-temperature units.
Entries show mean and nonzero sample standard deviation.
Bold marks minimum displayed total cost in each column; query percentages
are not ranked. The configuration-selection protocol is stated above.}
\label{tab:price-cost}
\label{tab:price-query}
\label{tab:main-price-range}
\textbf{(a) Mean total cost}\par\smallskip
\input{output/pdf/appendix_experiments/price_baselines_cost}
\par\medskip
\textbf{(b) Query percentage}\par\smallskip
\input{output/pdf/appendix_experiments/price_baselines_query}
\end{table}

\begin{figure}[H]
\centering
\includegraphics[width=\linewidth]{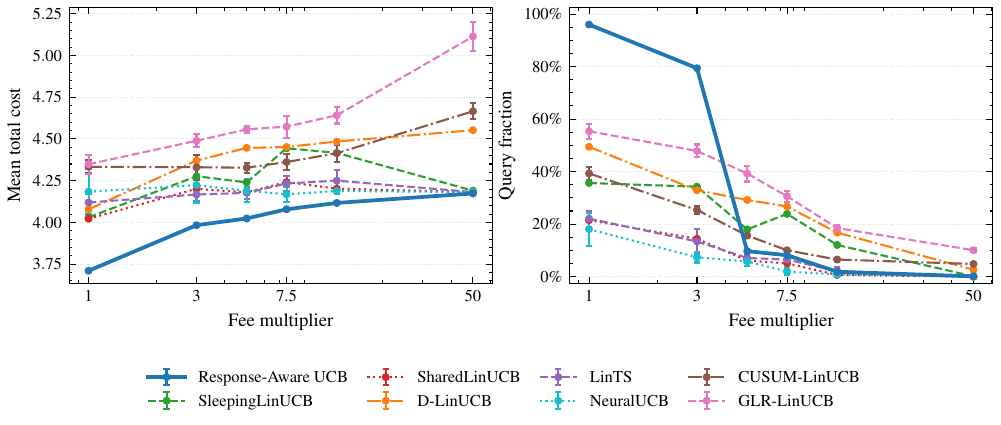}
\caption{Six-price test comparison with external baselines: total cost and
query fraction. The fee axis is logarithmic. Error bars show one
sample standard deviation when nonzero; lines join the measured prices
without smoothing. The cost axis has a nonzero origin.}
\label{fig:main-retuned-fees}
\end{figure}
\subsection{Routing with the same response learner}
\label{app:jena-routing}
\textbf{In brief.}
With the response architecture held fixed, \ORUCB{} attains the lowest
cost on all four streams, tying joint response UCB on three and improving
on both UCB alternatives on Delhi.

\textbf{Data.}
We use all four benchmark streams with the same fitting, warmup,
validation, and test intervals. Public information and preprocessing are unchanged
from Appendix~\ref{app:benchmark-queries}.

\textbf{Experts and routers.}
Every querying method uses the same experts and cubic correction learner,
including shared and identity responses, pooling, and retained memory.
Joint response UCB retains our shared and identity risk features but uses
ordinary unit-weight ridge regression. Disjoint response UCB instead fits
a separate ordinary risk regression for each expert.
Both alternatives select their confidence coefficient and risk
regularization on validation data using the same $256$-configuration budget per dataset;
response settings remain fixed.
Cyclic response queries the next available identity in registry order;
free prediction never queries. Each router learns from its own purchases,
so its fitted responses reflect its acquisition decisions.

\textbf{Metric and fees.}
We report mean squared error plus the same benchmark consultation fees:
$0.07$, approximately $0.30$, approximately $0.17$, and $0.02$.
The risk-regression target is the loss of the deployed, corrected prediction,
not the loss of the expert answer before correction.

\textbf{Results.}
\Cref{tab:jena-routing} compares the routers across all four streams.
On Delhi, \ORUCB{} costs $2.59240$, compared with
$2.62820$ for joint response UCB and $2.62465$ for disjoint response UCB.
Joint response UCB matches the benchmark costs on the other three streams.
Both cyclic querying and never querying have higher cost than \ORUCB{}
on every stream.

\begin{table}[H]
\centering\small
\setlength{\tabcolsep}{6pt}
\renewcommand{\arraystretch}{1.2}
\caption{Matched-response routing: mean total test cost on all four streams.
All querying methods can correct the selected expert's answer.
Bold values mark the minimum per dataset.}
\label{tab:jena-routing}
\label{tab:main-routing}
\input{output/pdf/appendix_experiments/routing}

\end{table}

%% file: output/pdf/appendix_experiments/price_baselines_cost.tex
\begin{tabular}{lcccccc}
\toprule
Fee multiplier & $\times 1$ & $\times 3$ & $\times 5$ & $\times 7.5$ & $\times 12.5$ & $\times 50$ \\
Fee per query & 0.17 & 0.51 & 0.85 & 1.28 & 2.13 & 8.52 \\
\midrule
\ORUCB{} & \textbf{3.711} & \textbf{3.984} & \textbf{4.024} & \textbf{4.079} & \textbf{4.117} & \textbf{4.174} \\
SleepingLinUCB & 4.030 & 4.277 & 4.240 & 4.444 & 4.417 & 4.190 \\
SharedLinUCB & 4.022 & 4.201 & 4.180 & 4.242 & 4.202 & 4.182 \\
D-LinUCB & 4.078 & 4.371 & 4.447 & 4.452 & 4.485 & 4.552 \\
LinTS & 4.119 {\scriptsize$\pm$ 0.049} & 4.167 {\scriptsize$\pm$ 0.040} & 4.177 {\scriptsize$\pm$ 0.036} & 4.233 {\scriptsize$\pm$ 0.044} & 4.250 {\scriptsize$\pm$ 0.065} & 4.182 \\
NeuralUCB & 4.185 {\scriptsize$\pm$ 0.134} & 4.223 {\scriptsize$\pm$ 0.108} & 4.190 {\scriptsize$\pm$ 0.069} & 4.169 {\scriptsize$\pm$ 0.045} & 4.189 {\scriptsize$\pm$ 0.020} & 4.182 \\
CUSUM-LinUCB & 4.333 {\scriptsize$\pm$ 0.039} & 4.331 {\scriptsize$\pm$ 0.073} & 4.328 {\scriptsize$\pm$ 0.030} & 4.362 {\scriptsize$\pm$ 0.050} & 4.415 {\scriptsize$\pm$ 0.051} & 4.666 {\scriptsize$\pm$ 0.048} \\
GLR-LinUCB & 4.349 {\scriptsize$\pm$ 0.056} & 4.489 {\scriptsize$\pm$ 0.038} & 4.557 {\scriptsize$\pm$ 0.021} & 4.574 {\scriptsize$\pm$ 0.066} & 4.641 {\scriptsize$\pm$ 0.049} & 5.113 {\scriptsize$\pm$ 0.085} \\
\bottomrule
\end{tabular}

%% file: output/pdf/appendix_experiments/price_baselines_query.tex
\begin{tabular}{lcccccc}
\toprule
Fee multiplier & $\times 1$ & $\times 3$ & $\times 5$ & $\times 7.5$ & $\times 12.5$ & $\times 50$ \\
Fee per query & 0.17 & 0.51 & 0.85 & 1.28 & 2.13 & 8.52 \\
\midrule
\ORUCB{} & 96.00 & 79.40 & 9.70 & 8.15 & 1.85 & 0.10 \\
SleepingLinUCB & 35.65 & 34.25 & 18.00 & 23.90 & 12.05 & 0.10 \\
SharedLinUCB & 21.50 & 14.40 & 6.10 & 5.05 & 0.65 & 0.00 \\
D-LinUCB & 49.45 & 32.90 & 29.20 & 26.75 & 16.75 & 2.65 \\
LinTS & 22.07 {\scriptsize$\pm$ 3.04} & 13.38 {\scriptsize$\pm$ 4.83} & 7.10 {\scriptsize$\pm$ 1.22} & 6.49 {\scriptsize$\pm$ 4.13} & 2.10 {\scriptsize$\pm$ 1.63} & 0.00 \\
NeuralUCB & 18.07 {\scriptsize$\pm$ 6.33} & 7.37 {\scriptsize$\pm$ 2.00} & 5.78 {\scriptsize$\pm$ 1.72} & 1.90 {\scriptsize$\pm$ 1.26} & 0.82 {\scriptsize$\pm$ 0.80} & 0.00 \\
CUSUM-LinUCB & 39.31 {\scriptsize$\pm$ 2.51} & 25.38 {\scriptsize$\pm$ 1.66} & 15.65 {\scriptsize$\pm$ 0.74} & 10.03 {\scriptsize$\pm$ 0.36} & 6.50 {\scriptsize$\pm$ 0.41} & 4.82 {\scriptsize$\pm$ 0.42} \\
GLR-LinUCB & 55.35 {\scriptsize$\pm$ 2.94} & 47.87 {\scriptsize$\pm$ 2.46} & 39.26 {\scriptsize$\pm$ 3.03} & 30.60 {\scriptsize$\pm$ 2.10} & 18.50 {\scriptsize$\pm$ 1.29} & 10.08 {\scriptsize$\pm$ 0.70} \\
\bottomrule
\end{tabular}

%% file: output/pdf/appendix_experiments/routing.tex
\begin{tabular}{lcccc}
\toprule
Router & Synthetic & Melbourne & Jena & Delhi \\
\midrule
\ORUCB{} & \textbf{0.08092} & \textbf{4.49069} & \textbf{3.72844} & \textbf{2.59240} \\
Joint response UCB & \textbf{0.08092} & \textbf{4.49069} & \textbf{3.72844} & 2.62820 \\
Disjoint response UCB & 0.08358 & 4.62449 & 3.74382 & 2.62465 \\
Cyclic response & 0.15557 & 5.66927 & 3.97404 & 2.84056 \\
Free prediction & 0.10949 & 5.40136 & 4.18186 & 2.72163 \\
\bottomrule
\end{tabular}

%% file: appendix/g_data_construction.tex
\section{Data and expert construction}
\label{app:data-construction}
The benchmarks use two types of expert answers. Synthetic and Melbourne
use simulated signals whose bias, noise, and nonlinear distortion we can
control. Jena and Delhi use predictions from models fitted to earlier
observations and then frozen. This distinction lets us study correction
both when the answer mechanism is known and when errors come from trained
predictors. Within each dataset, all methods use the same prepared stream
and availability schedule. Appendix~\ref{app:benchmark-queries} gives the
evaluation protocol and consultation fees; we describe the inputs and
experts here. Full generator and model configurations are retained with
the experimental records.

\subsection{Shared preprocessing}
\label{app:state-construction}
\textbf{Information available to the learner.}
All routers receive the same public features, internal prediction,
availability, and fees. The features describe the current context and,
on real data, the internal predictor's past errors. They never contain
the current outcome or an unpurchased expert answer.
Melbourne and Delhi use only lagged measurements and calendar features.
Jena additionally uses weather covariates observed at the target timestamp,
before temperature is revealed.

\textbf{Feature scaling.}
We standardize continuous covariates using the dataset's preprocessing
prefix; calendar sine--cosine terms and the linear trend remain as
constructed. Melbourne uses its fitting block, which is also replayed
for offline warmup. Jena uses its full fitting-and-warmup prefix, with the
transform fixed before replay. No scored observation enters these fits.
For the real-data streams, the regression state is
\[
 x_t^{\rm ctx}=(p+3)^{-1/2}
 \bigl(1,\clip_{[-3,3]}(\widetilde x_t)^\top/3,
 v_{t-1},\bar v_{t-1}\bigr)^\top,
\]
where $\widetilde x_t\in\R^p$ contains the preprocessed public features.
The two error coordinates are bounded logarithmic transforms of the
internal predictor's previous squared error and its exponentially
weighted average. Both use only past outcomes. Each coordinate inside
the parentheses lies in $[-1,1]$, so $\|x_t^{\rm ctx}\|_2\leq1$.
In \ORUCB{}, we use $x_t^0=s_{t,j}=\ell_{t,j}=x_t^{\rm ctx}$;
expert identity enters through the registry block of $c_{t,j}$.
The baselines receive the same context vector without that expansion.

\textbf{Prediction units.}
On real data, we divide outcomes and predictions by the internal
predictor's fitting root mean squared error $s$, and fees by $s^2$.
We convert reported costs back to squared-temperature units; synthetic
data use $s=1$. The physical residual and disagreement bounds
$(R_{\rm phys},D_{\rm phys})$ are $(1.2,1.3)$, $(15,20)$, $(15,25)$,
and $(8,14)$ for synthetic, Melbourne, Jena, and Delhi, respectively.
The learner uses $R=R_{\rm phys}/s$, $D=D_{\rm phys}/s$, and $G=R+D$.
Internal and frozen-model predictions are clipped to $[-50,60]$ on real
data; corrected predictions receive no additional clipping. Bounds are
checked without truncating observations.

\subsection{Synthetic}
\label{app:construction-synthetic}
The synthetic stream separates four situations: free prediction is
adequate, an expert can be used directly, an expert needs rescaling, or
an expert needs nonlinear correction. These regimes recur every $240$
rounds, with $60$ rounds per regime.
The public context contains a constant and four sine--cosine pairs of
this phase, normalized by $\sqrt{5}$.

The internal prediction $h_t$ is a fixed periodic signal. In regime $r$,
the outcome is $Y_t=h_t+\mu_r(U_t)+\epsilon_t$, where $U_t$ is a symmetric
seven-point latent signal on $[-1,1]$ and $\epsilon_t$ is mean-zero noise
on $\{-0.2,0,0.2\}$. The regime functions are
\[
\begin{gathered}
\mu_0(u)=0,\qquad \mu_1(u)=u,\qquad \mu_2(u)=0.65u,\\
\mu_3(u)=0.12u+0.30\tanh(3u)+0.50(u^2-\E U_t^2).
\end{gathered}
\]
The last regime includes a hyperbolic-tangent term rather than using a
cubic polynomial as its generating function.

Twelve experts form four families with three identities each. Their
answers are noisy, biased linear measurements of $U_t$ added to $h_t$.
Three families specialize in the informative regimes; the fourth gives
a noisier signal across regimes. Shared and identity-specific noise make
expert answers dependent without making them identical.
One identity per family is available at a time, for $150$ rounds followed
by $300$ rounds of absence. The generated outcomes and all potential
answers are fixed across learner seeds.

\subsection{Melbourne}
\label{app:construction-melbourne}
We use daily minimum temperatures from Melbourne
\citep{daily_min_temp_melbourne}, with $365$ initial observations reserved
to form lags. Public features are temperature lags $1,2,7,14,30,365$,
weekday and annual sine--cosine pairs, the second annual harmonic, and a
linear trend. An internal ridge predictor is fitted before the online
replay and subsequently updated from revealed outcomes.

The twelve experts are simulated sensors, not trained forecasters.
They measure the current temperature through four response types:
identity, two linear rescalings, and a saturating hyperbolic-tangent
transform. Specifically, the transforms of the clipped standardized
residual $u_t=\clip_{[-5,5]}((Y_t-h_t)/s)$ are
$u_t$, $0.65u_t$, $\tanh(1.8u_t)/1.8$, and $0.8u_t$.
Sensor reliability varies with season and recent temperature change,
measured by the absolute difference between standardized lag-one and lag-seven
features relative to its fitting-prefix distribution. Lower reliability
attenuates the signal and increases noise. Each family contains three
identities with small gain and bias differences; noise has both shared
and identity-specific components.

This construction combines real temperature dynamics with controlled
measurement errors. Only the purchased sensor answer is revealed.
One identity per family is available each day; identities rotate after
$25$ days, giving each a $50$-day absence before its next active period.

\subsection{Jena}
\label{app:construction-jena}
We sample the Jena climate series \citep{jena_climate} every four hours
and predict temperature. Its $13$ public features are pressure, humidity,
wind speed, temperature lags $1,6,42,180$, and hour, weekday, and annual
sine--cosine pairs. Pressure, humidity, and wind speed are contemporaneous
measurements; all temperature inputs are lagged.

The twelve frozen experts belong to four families: random forests,
extra trees, gradient-boosted trees, and RBF kernel ridge. Each family
contains three models with different depths or kernel widths, giving
distinct predictors without changing the prediction task. These models
are fitted on the initial $2000$ observations using scikit-learn
\citep{pedregosa2011scikit}. They remain frozen while the common internal
ridge predictor updates after each outcome.

One identity per family is available at a time. The active identity
changes every $84$ rounds, corresponding to $14$ days of availability
followed by $28$ days of absence. The schedule is fixed independently of
the router's purchases.

\subsection{Delhi}
\label{app:construction-delhi}
We predict daily mean temperature in the Delhi climate series
\citep{daily_delhi_climate}. The $16$ public features are one-day-lagged
humidity, wind speed, and pressure; temperature lags $1,2,7,14,30,365$;
weekday and annual sine--cosine pairs; the second annual harmonic; and
a linear trend. Unlike Jena, this stream uses no contemporaneous weather
measurements.

Six frozen scikit-learn models \citep{pedregosa2011scikit} are fitted on
the initial $600$ transformed observations: two random forests, two
extra-trees models, and two gradient-boosted models. Different tree
depths distinguish the two experts within each family. The common
internal ridge predictor is initialized on the same prefix and updates
after each revealed outcome.

Four to six experts are available per round. Two are always available;
the others follow a fixed schedule of absences, with one arriving during
warmup. Identities persist across the stream, and the availability
schedule is shared by every method.

%% file: appendix/i_related_details.tex
\section{Extended related work}
\label{app:related-details}

\textbf{Learning to defer.}
L2D extends selective prediction by replacing rejection with a decision
to rely on an external expert. Selective prediction studies the
error--rejection tradeoff \citep{chow1970reject}, convex rejection losses
\citep{Bartlett_Wegkamp_2008}, and abstention with deep networks
\citep{Geifman_El-Yaniv_2017}. Predictor--rejector formulations learn
prediction and rejection scores
\citep{cortes,cortes2024theory,ni2019calibration,Mao_Mohri_Zhong_2023};
other formulations accommodate arbitrary task losses
\citep{cao2022generalizing} or score rejection as an additional action
\citep{theoretically}.
Deferral instead accounts for the receiving expert's errors
\citep{madras2018predict,mozannar2020consistent}, with multiple-expert
methods comparing input-dependent performance
\citep{Verma2022LearningTD,mao2024principled} or selecting a set of experts
\citep{montreuil2026why}.
This dependence on expert capabilities motivates work on fairness and
accuracy in routing \citep{Keswani}, complementary human--AI teams
\citep{Hemmer}, sample-efficient training with expert predictions
\citep{charusaie2022sample}, and fine-tuning for individual experts
\citep{raman2021improvinglearningtodeferalgorithmsfinetuning}.
Joint predictor--router training has been studied through exact
optimization for linear models \citep{Mozannar2023WhoSP}, softmax
parameterizations for probability estimation
\citep{Cao_Mozannar_Feng_Wei_An_2023}, and losses that mitigate classifier
underfitting \citep{liu2024mitigating}.
Two-stage methods instead fit the predictor before learning the router
\citep{mao2023twostage}, while \citet{montreuil2026beyond} separate
classifier and expert-utility heads during training.
The observations available for learning are another distinction:
budgeted deferral queries a subset of experts on each labeled example
\citep{desalvo2025budgeted}, and probabilistic deferral handles missing
expert annotations while controlling workload allocation
\citep{nguyen2025probabilistic}.
Other methods accommodate experts unseen during training through
meta-learning \citep{tailor2024population} or identity-free deferral
\citep{strong2026identityfree}. Online routing with varying experts and
selected-action feedback \citep{duy2026online} and deferral in
non-stationary time series \citep{montreuil2026slds} address changes
during deployment.
Beyond classification, L2D has been developed for regression
\citep{mao2024regression}, multi-task learning
\citep{montreuil2024twostagelearningtodefermultitasklearning}, and
adversarial robustness in both two-stage and one-stage systems
\citep{montreuil2025adversarialrobustnesstwostagelearningtodefer,montreuil2026adversarial}.
Applications include sequential medical decisions
\citep{joshi2021learning}, guided deferral with language models in
healthcare \citep{strong2024towards,strong2025trustworthy}, and query
allocation in extractive question answering \citep{montreuil2026optimal}.
Causal approaches study counterfactual second opinions \citep{Benz} and
the evaluation of deferring systems \citep{palomba2025a}.
Expert input can also inform a prediction rather than replace it:
\citet{kerrigan2021combining} combine human predictions with model
probabilities, and \citet{pugnana2025ask} learn when to request human
feedback for an enriched predictor.
Related acquisition methods route to a classifier with additional
features \citep{regol2025acquisition} or supply advice to a selected
expert \citep{montreuil2026advice}.
Our response instead transforms the purchased answer. We learn both
acquisition and responses online, fitting the latter only on purchased
answers. The router must therefore distinguish the loss of its current
response from the risk achievable with the best feasible response.

\medskip\noindent\textbf{Online routing and bandits.}
The choice of an expert is an action-selection problem with partial
feedback. Linear contextual bandits estimate conditional action values
and use confidence bounds to balance prediction and exploration
\citep{li2010contextual,abbasi2011improved}; Thompson sampling uses parameter
randomization \citep{agrawal2013thompson}, and NeuralUCB uses neural
gradient features for its confidence model \citep{zhou2020neuralucb}.
Regression-oracle methods provide another route from prediction to
contextual-bandit decisions \citep{foster2020beyond}.
Sleeping bandits restrict each round's choice to the currently available
actions \citep{kleinberg2010sleeping}. These ideas underlie
SleepingLinUCB, SharedLinUCB, LinTS, and NeuralUCB in our comparison.
We also include mechanisms for changing action values: discounted linear
regression in D-LinUCB \citep{russac2019weighted}, and detector-triggered
restarts in CUSUM-LinUCB and GLR-LinUCB, adapted from the non-contextual
methods of \citet{liu2018change} and \citet{besson2022efficient}.
The seven external baselines follow the same availability and feedback
protocol and deploy purchased answers unchanged; separate matched-response
comparisons retain our correction learner and vary the router.
Our confidence weighting follows \citet{he2022nearly}, who study corrupted
linear bandits. In our setting, response learning generates the discrepancy
from the linear target; bounding this discrepancy lets us calibrate
acquisition.

\medskip\noindent\textbf{Paid information and aggregation.}
Learning to buy information is related to, but distinct from, choosing an
expert whose answer is the final prediction. \citet{janisch2019costly}
sequentially acquire feature values before classification, learning a
policy from episodes sampled from labeled data. Paid-observation models
also control which advice is obtained
\citep{seldin2014prediction,reyzin2026costly}, with different prediction
and feedback structures.
Stacked generalization likewise learns how to use other predictors'
outputs \citep{wolpert1992stacked,hastie2009elements}.
In our setting, however, the outputs available for fitting are selected
by the router, and the resulting fitting errors affect future purchases.
A purchased answer and the revealed outcome suffice to evaluate both
response proposals and combine them by exponential weighting
\citep{cesabianchi2006prediction}.

\medskip\noindent\textbf{Polynomial responses.}
Polynomial features allow nonlinear prediction with coefficients learned
by linear methods. In online decision problems,
\citet{hu2020smooth} estimate contextual rewards by local polynomial
regression, while \citet{liu2021smooth} use local Taylor approximations
with misspecified linear-bandit algorithms. Smoothness-adaptive contextual
bandits address the choice of approximation scale
\citep{gur2022smoothness}, and \citet{maran2024smooth} use Legendre features
to represent smooth state-action functions in reinforcement learning.
Unlike these context- or action-based approximations, our dictionary
transforms a purchased answer. The regret proof uses bounded features
and linear coefficients, without smoothness or distributional
orthogonality assumptions.